\newtheorem{theorem}{Theorem}
\newtheorem{defs}{Definition}
\newtheorem{cor}{Corollary}
\newtheorem{assum}{Assumption}
\DeclarePairedDelimiter{\abs}{\lvert}{\rvert}
\newcommand{\R}{\mathbb{R}}   
\newcommand{\1}[1]{\mathds{1}\left[#1\right]}
\newcommand{\bs}{\boldsymbol} 
\newcommand{\norm}[1]{\| #1 \|}
\newcolumntype{C}[1]{>{\centering\arraybackslash}p{#1}}
\definecolor{darkgreen}{rgb}{0,0.5,0}
\definecolor{corn}{rgb}{0.98, 0.93, 0.36}
\title{Delay-SDE-net: A deep learning approach for time series modelling with memory and uncertainty estimates
}
\author{
  Mari Dahl Eggen \\
  Department of Mathematics \\
  University of Oslo \\
  Oslo, Norway\\
  \texttt{marideg@math.uio.no} \\
   \And
  Alise Danielle Midtfjord \\
  Department of Mathematic \\
  University of Oslo \\
  Oslo, Norway\\
  \texttt{alisedm@math.uio.no} \\
}
\begin{document}
\maketitle

\begin{abstract}
To model time series accurately is important within a wide range of fields. As the world is generally too complex to be modelled exactly, it is often meaningful to assess the probability of a dynamical system to be in a specific state. This paper presents the Delay-SDE-net, a neural network model based on stochastic delay differential equations (SDDEs). The use of SDDEs with multiple delays as modelling framework makes it a suitable model for time series with memory effects, as it includes memory through previous states of the system. The stochastic part of the Delay-SDE-net provides a basis for estimating uncertainty in modelling, and is split into two neural networks to account for aleatoric and epistemic uncertainty. The uncertainty is provided instantly, making the model suitable for applications where time is sparse. We derive the theoretical error of the Delay-SDE-net and analyze the convergence rate numerically. At comparisons with similar models, the Delay-SDE-net has consistently the best performance, both in predicting time series values and uncertainties.
\end{abstract}

\keywords{ Physics-informed neural networks \and Time series \and Uncertainty estimation \and Stochastic delay differential equations \and Barron spaces}

\section{Introduction}
\label{sect:introduction}


Accurate modelling of time series is important within a wide range of fields, such as economics, engineering and natural sciences. As pointed out in \cite{box2008}, some dynamical systems might be modelled nearly exactly in time using physical laws. In this case deterministic models, that are typically solutions to ordinary differential equations (ODEs), do the job. However, the world is generally too complex to be modelled exactly. To deal with this, we can turn to stochastic differential equations (SDEs) to assess the probability of dynamical systems being in a specific state, which provides the possibility of retrieving accurate uncertainty estimates.

Despite the fact that existence and uniqueness results are obtained for many SDEs, see for example \cite{oksendal03} and \cite{protter05}, it is not possible to find an explicit solution of most. A consequence is that dynamical systems are often assumed to follow simple SDEs, as for example by assuming constant or linear drift and diffusion terms. It is even more challenging to consider mathematical models for autocorrelated dynamical systems. As explained in \cite{mohammed01}, the rate of change of many physical systems is not only dependent on the current state of system, but also on past states. For this more intricate case one can use statistical models such as autoregressive moving average (ARMA) models and relatives, see for example \cite{brockwell_davis_book91}. These statistical models assume linear autoregressive interactions in dynamical systems. It has been shown that the ARMA model has a continuous time analogue, also in the multivariate case \citep{marquardt07}. This continuous time analogue is called the continuous time ARMA (CARMA) model, which is a linear system of SDEs with known explicit solution, see for example \cite{brockwell2014recent}.

Even though linear autoregressive models have potential to approximate dynamical systems well, real world systems often have inherent nonlinear temporal dependencies. Many statistical approaches are developed to model such nonlinear systems, see for example \cite{corduas1994nonlinearity}; \cite{kantz_schreiber_2003} and \cite{terasvirta10}. Mathematical models representing nonlinear time dependence are called stochastic delay differential equations (SDDEs)  \citep{mohammed98}, of which the CARMA model is a special case \citep{nielsen20}. Further, in the last decades there has been developed a series of nonlinear time series models built on neural networks, as for example the time delay neural network \citep{waibel1989phoneme}, the hybrid ARIMA and neural network model \citep{zhang2003time}, recurrent neural networks \citep{rumelhart1986learning}, long short-term memory \citep{hochreiter1997long} and encoded time series together with convolutional neural networks \citep{wang2015encoding,gu2018recent,borovykh2017conditional,zheng2014time}. 

More recently, \cite{chen2018neural} contributed to the field of time series modelling by introducing the ODE-net, which is based on continuous dynamics of ordinary differential equations (ODEs), where the derivative is given by a neural network. Even though the ODE-net can be used to model dynamical systems, it does not take previous states into account. That is, the ODE-net has no memory, making it a poor choice in modelling of autoregressive dynamical systems compared to the former mentioned networks. Constraining the neural network with an explicit mathematical structure, such as done in the ODE-net, is referred to as physics-informed neural networks \citep{raissi2019physics}. Being physics-informed may provide several good properties for the neural networks, since the networks are given additional information such as physics-inspired constraints and knowledge. The neural network models may potentially achieve higher interpretability, better robustness on perturbed data, and more stability on out-of-distribution (OOD) data, all on a lower amount of training data \citep{karniadakis2021physics,kashinath2021physics,yang2019adversarial}.

Neural network methods have gained significant attention due to their prediction ability, making them a powerful tool also in time series prediction. They also require minimal data preprocessing compared to some classical models (as for example the ARMA models that require stationary data), speeding up the process of data preparation. However, despite their ability to capture complex patterns and provide high accuracy predictions, neural networks are shown to be unstable. That is, if input data deviate from the training data, such as having small perturbations, structural changes or adversarial inputs, neural networks might give widely wrong and over-confident predictions \citep{antun2020instabilities,papernot2018deep}. Further, basic neural networks do not offer estimates of prediction uncertainty. Uncertainty estimates are particularly important in safety-critical applications such as medicine \citep{lambrou2010reliable,wainberg2018deep}, risk analysis \citep{varshney2016engineering,tambon2022certify} and autonomous systems \citep{pereira2020challenges}, where it is essential to know when to trust a prediction. There are also application areas that involve dynamical systems with large inherent uncertainty which is important to quantify, such as meteorology \citep{scher2018predicting,wang2019deep} and finance \citep{jeon20,ZHANG2020101528}. Because of these requirements a new trend has emerged in machine learning research, where focus on achieving high prediction accuracy has been complemented with proving additional decision support such as explainability \citep{zhang2022explainable,midtfjord2022decision}  and uncertainty estimates \citep{kabir2018neural}.

The most commonly used method for uncertainty estimation is bayesian neural networks \citep{denker1990transforming, mackay1992practical}, where probability distributions are created over model parameters to quantify uncertainty. Inconveniences in using this method include the need of specifying a suitable prior distribution, as well as having to execute many iterations to marginalize the uncertainty over parameters. Other methods used in uncertainty estimation include ensemble methods \citep{kurutach2018model,rajeswaran2016epopt} and test-time augmentation methods \citep{wang2019aleatoric, kim2020learning}, where the former compute the prediction from an ensemble of predictors and use the variance between the predictors as uncertainty estimate, while the latter repeats the prediction from a deterministic network over altered input data. A drawback with all the mentioned methods is the need to repeatedly compute a prediction in order to obtain uncertainty estimates, which can be both computational costly and time consuming. For a review of research on estimating and quantifying uncertainty in neural networks' predictions, see  \cite{abdar2021review} and \cite{gawlikowski2021survey}.

As mentioned in \cite{gawlikowski2021survey}, uncertainty estimation methods falling under the collective term of \textit{single deterministic methods} do not hold
the inconvenience of repeatedly having to compute predictions. Such methods estimate the uncertainty directly, either within the neural network itself \citep{mozejko2018inhibited,nandy2020towards}, or in juxtaposed networks making uncertainty inference about the prediction network \citep{raghu2019direct,oberdiek2018classification}. With single deterministic methods, uncertainty estimates can be made instantly, making them suitable for real-world applications such as text translation, autonomous systems and predictions within finance.

A more recent approach related to providing uncertainty estimates that accompany deterministic predictions is to use neural network models inspired by SDEs, see \cite{tzen2019neural} and \cite{li20}. Additionally, \cite{kong20} contributed to this field by developing the SDE-net. Further innovations on the SDE-net have been developed for more flexible modelling, see for example \cite{hayashi22}; \cite{wang21_1}; \cite{wang21_2} and \cite{yang21}. The SDE-net is similar to the ODE-net in that the derivative is modelled using a neural network, but in the SDE-net an additional neural network is added to model a stochastic term. Even though these neural SDEs can be viewed as a state transformation of dynamical systems, they do not take previous states into account, making them less suitable for modelling of autoregressive dynamical systems. 

Providing good estimates of uncertainty is not necessarily straight forward, as uncertainty can origin from different sources. \textit{Aleatoric uncertainty} refers to the natural randomness inherent in a specific task, while \textit{epistemic uncertainty} arise due to lack of knowledge within a model, which is highly connected to OOD data detection \citep{hullermeier2021aleatoric}. In many applications it is important to distinguish between these two types of uncertainty, as it provides additional information about the trustworthiness of a prediction, and knowledge on how to improve a model \citep{der2009aleatory}. The mentioned SDE-net provides some estimates on both of these uncertainties, but focuses mainly on OOD data detection. In modelling of time series, time-varying variance (heteroskedasticity) is a typical feature. This requires good estimates of aleatoric uncertainty, as periods with larger variance increase uncertainty of models. This is why some SDE-net innovations focus more on handling aleatoric uncertainty \citep{wang2019aleatoric,wang21_2}. 

The above discussions suggest that a time series model would benefit from being able to describe autocorrelated dynamical systems with nonlinear dependencies, as well as giving precise uncertainty estimates. That is, a suitable model could be one taking previous states into account like the ARMA model, describing nonlinearities such as neural networks, and being able to estimate and distinguish between aleatoric and epistemic uncertainty. This work proposes the Delay-SDE-net, an extended version of the SDE-net \citep{kong20} which includes time lagged values of time series. The Delay-SDE-net is based on a stochastic delay differential equation (SDDE) with multiple delays. The model coefficients are given by neural networks, providing a flexible model estimation framework. Additionally, the stochastic part of the Delay-SDE-net is split into two neural networks to represent aleatoric and epistemic uncertainty. The networks are trained according to the single deterministic approach, meaning it can provide
both prediction and uncertainty estimates immediately.

The paper is structured by first providing preliminaries in Section \ref{sect:preliminaries}, which include notation, theory about uncertainty estimates, an introduction of SDDEs, as well as mathematical theory for two-layer neural networks. The Delay-SDE-net is presented in Section \ref{sect:sdde_net_theory}, including its theoretical setup and training algorithm. The theoretical error bound of a two-layer Delay-SDE-net is derived in section \ref{sect:convergence_theory}, and further tested numerically in Section \ref{sect:Analyses_and_application}. This section also compares the Delay-SDE-net to benchmark models, both using simulated data and in a real-world case study. Finally, Section \ref{sect:conclusions} summarizes the results and gives some conclusive remarks.


\section{Preliminaries}
\label{sect:preliminaries}


In this section, theory about model uncertainties will be presented in terms of aleatoric and epistemic uncertainty. Subsequently, a specific class of It{\^o} SDDEs will be presented, as well as the corresponding Milstein scheme. A short introduction of two-layer neural networks and Barron spaces follows. For a more complete introduction of each topic, we refer the reader to given references. We start by introducing notation. 

\subsection{Notation}
\label{subsect:notation}

For convenience of the reader, this section introduces and summarizes notation used in this work. First, we assume that a complete and filtered probability space $(\Omega,\mathcal{F},\{\mathcal{F}_{t\geq 0}\},P)$ is given, under which an $d_{W}$-dimensional standard Brownian motion process $\{W(t)\coloneqq (W^1(t),\ldots ,W^{d_{W}}(t))\}_{t\geq 0}$ is defined. 

Now, define $\mathbb{N}\coloneqq \mathbb{N}_0\backslash \{0\}$, and let $\R^{d}$ be the $d$-dimensional Euclidean space with its standard norm over scalars denoted as $\abs{{}\cdot{}}$, and over vectors as $\norm{{}\cdot{}}_2$. Regular dot product between vectors $u$ and $v$ are denoted $uv$, the corresponding Hadamard product is written as $u\circ v$, and $v^T$ means the transpose of $v$. Further, define the $L_2$-norm $\norm{{}\cdot{}}_{L_2}\coloneqq \\ \norm{{}\cdot{}}_{L_2(\Omega,\R^d)}= (E_{x\sim\mu}[\abs{{}\cdot{}}^2])^{1/2}$ for $x\in \R^d$ with probability measure $\mu\in P(\Omega)$, and the sup-norm $\norm{{}\cdot{}}_C\coloneqq \sup_{-\tau\leq s \leq 0}\abs{{}\cdot{}}$ over all continuous functions (multidimensional paths) $C([-\tau,0];\R^d)$, for a fixed delay $\tau>0$. Continuous functions $C^{i_t,i_x}(T\times \R^{dp},\R^d)$ are having up to $i_t$ and $i_x$ bounded derivatives for time and space respectively. For some bounded domain $D$, we denote by $\mathcal{B}\coloneqq \mathcal{B}_2(D,\R)$ the Barron space of interest in the current work.  The $d$-dimensional stochastic process $X(t)$ driven by $W(t)$ has a corresponding continuous-time Delay-SDE-net model $X^{(m)}(t)$, with $m\in\mathbb{N}$ being the number of neurons in the model coefficients. The discrete-time versions of $X(t)$ and $X^{(m)}(t)$ are denoted by $X^\pi(t)$ and $X^{\pi,(m)}(t)$ respectively, with $\pi$ representing a defined discretization scheme. Elements of such vector processes are denoted with an additional superscript $j=1,\ldots ,d$.

In the empirical analyses, we work with training data sets, $\mathcal{D}_0$, validation and test data sets, $\mathcal{D}$, OOD data sets, $\tilde{\mathcal{D}}$, and modified data sets, $\mathcal{D}^*$. For each time point, we denote by $\bs{x}_{t_k}=[x_{1,t_k},\ldots ,x_{d,t_k}]$ a $d$-dimensional data point. Estimated model coefficients for the Delay-SDE-net are denoted as $f_m$ for drift and $g_m$ for diffusion (or $g_{a,m}$ and $g_{e,m}$ when discussing aleatoric and epistemic diffusion, respectively). Further, estimated model coefficients for other models (or for models in general) are denoted by $\hat{f}$ for drift and $\hat{g}$ for diffusion (or $\hat{g}_{a}$ and $\hat{g}_{e}$ when discussing aleatoric and epistemic diffusion, respectively). Finally, elements of $d$-dimensional vector processes, $X(t)$, and vector data points, $\bs{x}_{t_k}$, are normally distributed with mean $\mu$ and standard deviation $\sigma$ when we write $X(t),\bs{x}_{t_k}\sim N(\mu,\sigma)^d$. 

\subsection{Uncertainty estimates}
\label{subsect:model_uncertainties}

This section presents some important factors related to uncertainty quantification for data-driven models. Especially, we focus on the importance of distinguishing between two different sources of uncertainty, namely aleatoric uncertainty and epistemic uncertainty \citep{der2009aleatory}.

Aleatoric uncertainty refers to uncertainty that origins from natural randomness in a task, which is considered not possible to reduce. In other words, observed data points $y$ from a data generating function $f$ are disturbed by stochastic noise, $y=f(x)+e$, where the uncertainty due to $e \sim N(0,\sigma)$ is the aleatoric uncertainty.

Epistemic uncertainty refers to lack of knowledge within an estimated model $\hat{f}(x)$, and is considered uncertainty possible to reduce by gaining more information. Hence, epistemic uncertainty is the uncertainty due to the difference between the estimated model and the true function, $\hat{f}(x)-f(x)$, and is illustrated in in Figure \ref{fig:epistemic_figure}. The literature often refers to two sources of epistemic uncertainty \citep{sluijterman2021evaluate}. 

\begin{figure}[hbt!]
    \centering
    \includegraphics[width=6cm]{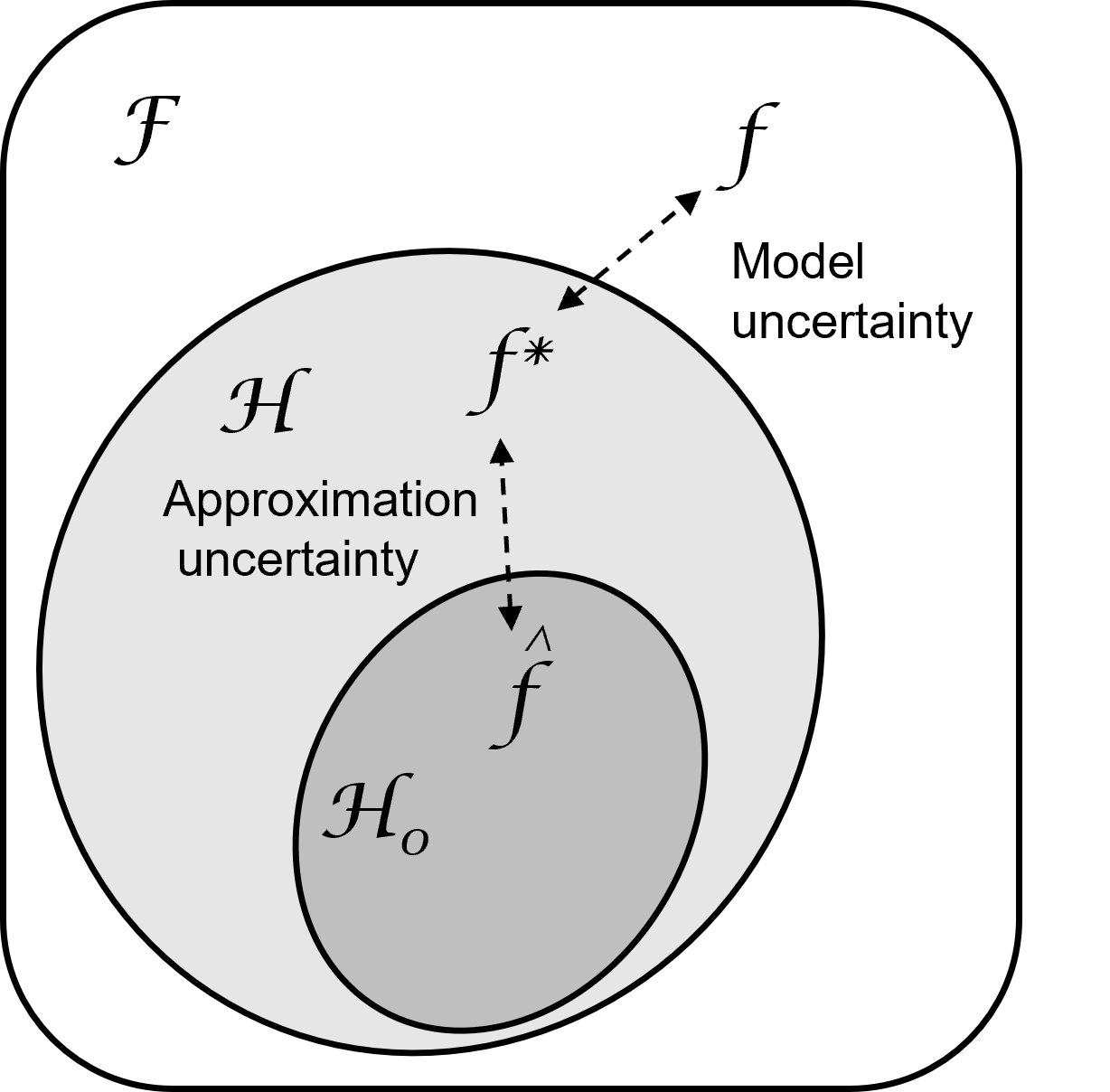}
    \caption{Illustration of the two sources of epistemic uncertainty. $\mathcal{F}$ is the true function space which maps $x$ to $y$. $\mathcal{H}$ is the hypothesis space, namely the class of possible models functions the algorithm can learn. $\mathcal{H}_0$ is contained in $\mathcal{H}$, and is the hypothesis space which the algorithm is trained on.}
    \label{fig:epistemic_figure}
\end{figure}

First, \textit{approximation uncertainty} originates from that the learned model $\hat{f}$ is not trained well enough within the hypothesis space $\mathcal{H}$. This means that we have not found the optimal model $f^*$ within the class of possible models functions. In addition, the hypothesis space the model is trained on, $\mathcal{H}_0$, might not cover the whole hypothesis space $\mathcal{H}$. This can make the model unstable and uncertain outside $\mathcal{H}_0$. The approximation uncertainty can be reduced by increasing the number of training samples, as well as their covered space. 

The second source of epistemic uncertainty is \textit{model uncertainty}, which considers the coverage of the hypothesis space. That is, $\mathcal{H}$ might not cover the true function space $\mathcal{F}$. The consequence of this is that the model might not be able to replicate the true function properly, for example due to lack of model complexity. Possible efforts to reduce the model uncertainty can be to change the model structure, change the learning algorithm, or to provide additional information in terms of new or transformed explanatory variables. Since a model does not know if the true underlying function is contained within its hypothesis space, model uncertainty is often mixed with random noise, that is, aleatoric uncertainty.

To represent uncertainty of model predictions correctly, uncertainty estimates should include both aleatoric and epistemic uncertainty \citep{hora1996aleatory}. However, traditional machine learning methods fail to differentiate between these, and might often give estimates of only one of them \citep{hullermeier2021aleatoric}. For a detailed discussion on the differentiation between aleatoric and epistemic uncertainty, we refer the reader to \cite{der2009aleatory}.

Since neural networks are shown to be able to approximate nearly any function \citep{hornik1989multilayer,lu2017expressive, benth2022neural}, also with as few as only one hidden layer, the (epistemic) model uncertainty could in theory be reduced to zero. The same applies for (epistemic) approximation uncertainty, which in theory could be reduced to zero if enough data points from the whole hypothesis space were provided in training. Despite these facts, it would rarely be possible to train the optimal neural network for most situations in practise, as discussed in \cite{colbrook2022difficult}. Further, to retrieve data from all possible situations is rarely feasible. This means that both aleatoric and epistemic uncertainty estimation, also for neural networks, will continue to be relevant. However, with increasingly better models, and larger amounts of data, aleatoric uncertainty may become the most influencing part.

\subsection{SDDEs with multiple delays and the Milstein scheme}
\label{subsect:sdde_theory}

This section introduces It{\^o} SDDEs with multiple delays as presented in \cite{mohammed04}, as well as the corresponding Milstein scheme.

Some concepts have to be introduced to set up the SDDE framework. First, define a projection $\Pi:C\to \R^{d p}$ associated with given fixed time points $u_1,\ldots ,u_p\in[-\tau,0]$ as
\begin{align}
    \label{eq:the_projection}
    \Pi(\eta)\coloneqq(\eta(u_1),\ldots ,\eta(u_p))\in\R^{d p},
\end{align}
with $\eta: \Omega \to C$ being a $\mathcal{F}_0$-measurable initial process. Further, for any continuous $d$-dimensional stochastic process $X(t):[-\tau,T]\times\Omega\to\R^d$, where $\tau, T\in\R^{+}$, we define its segment process by 
\begin{align}
    \label{eq:the_segment_process}
    X_t(u)\coloneqq X(t+u),\quad u\in [-\tau,0],\quad t\in [0,T],
\end{align}
with $X_t$ taking values in $C$.

Given two projections $\Pi_1$ and $\Pi_2$ with associated time points $u_{1,1},\ldots ,u_{1,p_1}\in [-\tau,0]$ and $u_{2,1},\ldots ,u_{2,p_2}\in [-\tau,0]$ respectively, we define the following class of SDDEs. Let the stochastic (vector) process $X(t)$ be given by the (unique strong) solution of 
\begin{align}
    \label{eq:sdde_partitioned}
    X(t) = \begin{cases}
    & \eta(0) + \int_0^t f(s,\Pi_1(X_s))ds + \int_0^t g(s,\Pi_2(X_s))dW(s) ,\quad t\geq0,\\
    & \eta(t),\quad -\tau\leq t < 0,
    \end{cases}
\end{align}
where $W(t)$ is a Brownian motion process \citep[see for example][]{oksendal03} taking values in $\R^{d_{W}}$, and the coefficients $f:[0,T]\times \R^{d p_1}\to \R^d$, $g:[0,T]\times\R^{d p_2}\to \R^{d\times d_W}$ are given by tame functions \citep[see][]{mohammed04}, and satisfy the Lipschitz and boundedness conditions in \cite{mohammed04} (Eq.\,(1.4) and (1.5) respectively). Then, for each $q\geq 1$, there exists a constant $C_q=C_q(q,C_L,T)>0$, with $C_L$ being a Lipschitz constant, such that 
\begin{align*}
    E\norm{X_t}^{2q}_C \leq C_q(1+E\norm{\eta}^{2q}_C),
\end{align*}
for all $\eta\in C$, $t\in [0,T]$, where $\norm{{}\cdot{}}_C\coloneqq \sup_{-\tau\leq s \leq 0}\abs{{}\cdot{}}$ (see Section\,\ref{subsect:notation}).

Define an equidistant discretization of the time interval $[-\tau,T]$ as
\begin{align}
    \label{discretization_partition}
    \mathcal{T} = \{-\tau = t_{-L}<t_{-L+1}<\cdots < 0 = t_0 < t_1 <  \cdots < t_K = T\}, 
\end{align}
for integers $L\geq 0$, $K\geq 1$, and $\pi\coloneqq \Delta t = t_{k+1}-t_k$ for $k=\{-L,\ldots ,K-1\}$. We assume that $\mathcal{T}$ is defined to include the set of delays, $\{u_{1,1},\ldots ,u_{1,p_1},u_{2,1},\ldots ,u_{2,p_2}\}\in \mathcal{T}$. Further, denote by $f^j(t,x^{(1)})$ and $g^{ji}(t,x^{(2)})$ element $j$ and $(j,i)$ of $f\in\R^d$ and $g\in\R^{d\times d_W}$ respectively, where $x^{(1)}\in\R^{d p_1}$ and $x^{(2)}\in\R^{d p_2}$. For the case $f\in C^{1,2}(T\times \R^{d p_1},\R^d)$, $g\in C^{1,2}(T\times \R^{d p_2},\R^{d\times d_W})$ and $W(s)=W(0)=0$, $s\leq 0$, we define the Milstein scheme of the SDDE in Eq.\,\eqref{eq:sdde_partitioned} as
\begin{align}
\begin{split}
    \label{eq:multi_dim_milstein_scheme}
    X^{j,\pi}(t_{k+1}) = & X^{j,\pi}(t_k) + f^{j}(t_k,\Pi_1(X_{t_k}^{\pi}))\Delta t +g^{ji}(t_k,\Pi_2(X_{t_k}^{\pi}))\Delta W^i(t)\\ & + F^j\left(g^{j_2i_2},\frac{\partial g^{ji}}{\partial x_{j_2i_2}}\right),
    \end{split}
\end{align}
where $X^{j,\pi}(t)$ is the discretized version of the $j$-th component of $X(t)\in\R^d$, and the superscript $\pi$ indicates that the time discretization in Eq.\,\eqref{discretization_partition} is applied. Finally, we have $\Delta W^i(t)\coloneqq W^i(t_{k+1})-W^i(t_k)\in \R^{d_W}$. See \cite{mohammed04} for exact expression of $F(\cdot,\cdot)$, and note that $F^j = 0$ whenever $\partial g^{ji}/\partial x_{j_2i_2}=0$. 

\subsection{Two-layer neural networks and Barron spaces}
\label{subsect:nn_barron_space_theory}

Existence and error bounds of functions given by two-layer neural networks are discussed in the spirit of Barron spaces. As stated in \cite{weinan21}, Barron spaces are defined as the set of continuous functions that can be represented as the continuous-time version of a two-layer neural network, having finite Barron norm. That is, a continuous function approximated by a two-layer neural network with approximation error bounded by the Barron norm, is in Barron space. A more formal definition follows.

As presented in \cite{weinan2019_pop_risk}, we define a target function $f^{*} = E[y\mid x]$, for $x\in\R^d$, $y\in\R$, that can be learned from i.i.d. data samples $\{\bs{x}_i,y_i\}_{i=0}^{n}$, $n\in\mathbb{N}$, drawn from an underlying distribution $p_{x,y}$. The target function $f^*$ is illustrated in a function space setting in Figure\,\ref{fig:epistemic_figure}. We refer to data from $p_{x,y}$ as in-distribution (ID) data. In contrary, OOD data refers to data drawn from another probability distribution than the ID data, denoted $\tilde{p}_{x,y}$. The following introduction of two-layer neural networks and Barron spaces holds for both data types. An explanation of how to use OOD data in training of neural networks is given in Section\,\ref{subsubsect:train_g_e}.

A two-layer neural network can be represented as 
\begin{align}
    \label{eq:two-layer_nn}
    f_m(x;\theta) = \sum_{i=1}^{m} a_i \sigma(w_i^T x + b_i),
\end{align}
with $\theta = \{a_i,w_i,b_i\}_{i=1}^{m}$, $(a_i,w_i,b_i)\in \R\times \R^{d} \times \R\coloneqq \hat{\Omega}$, denoting the model parameter space, $\sigma:\R\to\R$ being an activation function, and $m$ the number of hidden neurons. Note that we use $w_i\in \R^{d+1}$ in applications in this work to include time as a dimension. The infinite width limit of the hidden layer represented in Eq.\,\eqref{eq:two-layer_nn} is taken as a function $f_\rho(x):D\to \R$, for a bounded domain $D\subset \R^{d}$, given by
\begin{align}
    \label{eq:two-layer_nn_continuous}
    f_\rho(x) = \int_{\hat{\Omega}}a\sigma(w^T x + b)\rho(da\otimes dw \otimes db) = E_{(a,w ,b)\sim \rho}\left[a\sigma (w^T x + b)\right].
\end{align}
Here, the probability measure $\rho \in \hat{P}(\hat{\Omega})$ is defined on a measure space $(\hat{\Omega},\hat{\mathcal{F}},\hat{P})$, with $\hat{\mathcal{F}}$ being the Borel $\sigma$-algebra on $\hat{\Omega}$.

In \cite{weinan21}, the Barron norm for a general activation function $\sigma$ is defined as
\begin{align*}
    \norm{f}_{\mathcal{B}_q} \coloneqq \inf_{\rho} \left(E_\rho\left[\abs{a}^q(\norm{w}_1+\abs{b}+1)^q\right]\right)^{1/q},
\end{align*}
with infimum over $\rho$ such that $f =f_\rho$ holds, where $q\in[1,\infty]$. Further, Barron space is defined as the set of continuous functions of the form as in Eq.\,\eqref{eq:two-layer_nn_continuous} satisfying $\norm{f}_{\mathcal{B}_q}<\infty$. For a more extensive introduction of Barron spaces, see for example \cite{weinan2019_pop_risk}; \cite{weinan21}; \cite{weinan21_barron_space_appendix}; \cite{weinan22_pde} and \cite{weinan22_barron_pointwise_properties}.

The theoretical model error bound for the Delay-SDE-net (introduced in Section\,\ref{sect:sdde_net_theory}) is derived in Section\,\ref{sect:convergence_theory}. This derived result holds for Barron functions only, see Theorem\,\ref{thm:continuous_Delay-SDE-net_error} and corresponding proof.  This is because functions in Barron spaces have the useful property that their two-layer neural network error bounds are given (inversely) as a function of the number of neurons, $m$, in the neural network (see Eq.\,\eqref{eq:two-layer_nn}). Note that we use the Barron space with $q=2$ in this work, and write $\mathcal{B}\coloneqq\mathcal{B}_{2}$ for convenience. Before stating a theorem giving an error bound for Barron functions we need the following definition.

\begin{defs}
\label{def:the_gamma-function}
Given an activation function $\sigma$, define the $\gamma(\cdot)$-function as 
\begin{align*}
    \gamma(\sigma) = \gamma_0(\sigma) + \inf_{x\in\R}u(x),
\end{align*}
where
\begin{align*}
    \gamma_0(\sigma) &= \int_\R \abs{\sigma''(x)}(\abs{x} + 1)dx\quad\text{and}\quad
    u(x) = \abs{\sigma(x)} + (\abs{x}+2)\abs{\sigma'(x)}.
\end{align*}
\end{defs}

\begin{theorem}[\cite{ma20}, Theorem\,4]
\label{thm:convergence_barron_functions}
For any $f\in\mathcal{B}$ and $m\in\mathbb{N}$ there exists a two-layer neural network $f_m(\cdot;\theta)$ with finite width $m$ and activation function $\sigma$ such that
\begin{align*}
    \norm{f_m(x;\theta)-f(x)}_{L_2} \leq \frac{\sqrt{3C_\sigma}\norm{f}_\mathcal{B}}{\sqrt{m}},
\end{align*}
where $C_\sigma = \left(\gamma(\sigma) + \min\{\abs{\sigma'(+\infty)},\abs{\sigma'(-\infty)}\} + \abs{\sigma(0)}\right)^2$.
\end{theorem}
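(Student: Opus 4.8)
The plan is to reduce the statement to the known approximation result for two-layer neural networks in the $L_\infty$ sense on the Barron class, which is the content of the cited reference \cite{ma20}. Since this theorem is quoted verbatim as Theorem~4 of that paper, my proof proposal is really a reconstruction of the standard Barron-type argument: I would first fix $f\in\mathcal{B}=\mathcal{B}_2$ and pick, for any $\varepsilon>0$, a representing probability measure $\rho$ on $\hat\Omega$ such that $f=f_\rho$ (as in Eq.~\eqref{eq:two-layer_nn_continuous}) and $\left(E_\rho[\abs{a}^2(\norm{w}_1+\abs{b}+1)^2]\right)^{1/2}\le \norm{f}_{\mathcal{B}}+\varepsilon$; such $\rho$ exists by definition of the Barron norm as an infimum. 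The integral representation expresses $f(x)$ as the expectation of the random feature $a\sigma(w^Tx+b)$, so the natural estimator is the empirical average $f_m(x;\theta)=\frac1m\sum_{i=1}^m a_i\sigma(w_i^Tx+b_i)$ with $(a_i,w_i,b_i)$ i.i.d.\ from $\rho$ (absorbing the factor $1/m$ into the outer weights to match Eq.~\eqref{eq:two-layer_nn}).

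Next I would control the expected squared $L_2$ error by a Monte Carlo / Rademacher-complexity bound. Writing $f(x)-f_m(x;\theta)=\frac1m\sum_{i=1}^m\big(a_i\sigma(w_i^Tx+b_i)-f(x)\big)$, the key step is to bound $E_\theta\,\norm{f-f_m(\cdot;\theta)}_{L_2}^2 = \frac1m\,E_\theta\big[\mathrm{Var}_\rho(a\sigma(w^Tx+b))\big]$, integrated against the data distribution $\mu$. The variance is dominated by the second moment $E_\rho[a^2\sigma(w^Tx+b)^2]$, and one then needs a pointwise bound of the form $\abs{\sigma(w^Tx+b)}\lesssim (\norm{w}_1+\abs{b}+1)\cdot C(\sigma,\abs{x})$; this is exactly where the constant $C_\sigma$, built from $\gamma(\sigma)$, the asymptotic slopes $\sigma'(\pm\infty)$ and $\sigma(0)$ in Definition~\ref{def:the_gamma-function}, enters. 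The bound on $\sigma$ in terms of these quantities comes from the representation $\sigma(t)=\sigma(0)+\sigma'(\pm\infty)t + \int (\text{second-difference kernel})\sigma''$, so that $\gamma_0(\sigma)$ captures the total variation of $\sigma''$ weighted by $(\abs{x}+1)$ and $\inf_x u(x)$ handles the residual linear-growth terms. Combining, $E_\rho[a^2\sigma(w^Tx+b)^2]\le C_\sigma\,E_\rho[a^2(\norm{w}_1+\abs{b}+1)^2]\le C_\sigma(\norm{f}_{\mathcal{B}}+\varepsilon)^2$, and the factor $3$ arises from bookkeeping when splitting the quadratic $(\norm{w}_1+\abs{b}+1)^2$ or from a union/triangle-inequality step in the reference's precise argument. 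Taking expectation over $x\sim\mu$ and then over $\theta$ gives $E_\theta\,\norm{f-f_m(\cdot;\theta)}_{L_2}^2\le 3C_\sigma(\norm{f}_{\mathcal{B}}+\varepsilon)^2/m$.

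Finally, since the expectation over the random draw of parameters is at most $3C_\sigma(\norm{f}_{\mathcal{B}}+\varepsilon)^2/m$, there must exist at least one realization $\theta$ of the $m$ parameter triples achieving $\norm{f_m(\cdot;\theta)-f}_{L_2}^2\le 3C_\sigma(\norm{f}_{\mathcal{B}}+\varepsilon)^2/m$, hence $\norm{f_m(\cdot;\theta)-f}_{L_2}\le \sqrt{3C_\sigma}(\norm{f}_{\mathcal{B}}+\varepsilon)/\sqrt{m}$; letting $\varepsilon\to0$ (or a limiting/compactness argument, since the class of $m$-neuron networks with bounded parameters is closed in $L_2$) yields the stated inequality with $\norm{f}_{\mathcal{B}}$ itself. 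I expect the main obstacle to be the pointwise estimate $\abs{\sigma(w^Tx+b)}\le C(\sigma)(\norm{w}_1+\abs{b}+1)$ with the \emph{sharp} constant matching $C_\sigma$ in Definition~\ref{def:the_gamma-function}: getting the right dependence on $\gamma_0(\sigma)$, $\inf_x u(x)$, the slopes at $\pm\infty$, and $\sigma(0)$ requires the careful integral-representation manipulation of $\sigma$ rather than a crude Lipschitz bound, and this is the genuinely technical part of \cite{ma20}'s Theorem~4. Since the theorem is cited rather than reproved here, in the paper I would simply invoke it directly and only remark that it applies because the chosen activation function satisfies the regularity assumptions (e.g.\ $\sigma\in C^2$ with integrable $\sigma''$ and finite asymptotic slopes) needed for $C_\sigma<\infty$.
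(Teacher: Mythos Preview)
The paper does not prove this theorem at all: it is stated as a quotation of Theorem~4 in \cite{ma20}, and immediately after the statement the authors write ``Also consult the proof of the theorem in \cite{ma20}, App.\,C.2,'' with no further argument. Your final remark---that in the paper one would simply invoke the cited result and only check that the activation function has the regularity needed for $C_\sigma<\infty$---is therefore exactly what the paper does, and is the correct disposition here.

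Your reconstruction of the Barron-type Monte Carlo argument is a reasonable sketch of how the proof in \cite{ma20} goes (integral representation, i.i.d.\ sampling from $\rho$, variance bound, existence of a good realization), but it is extra material relative to the paper. A minor caution: the origin of the constant $3$ and the precise way $C_\sigma$ enters are specific to the manipulations in \cite{ma20}, App.\,C.2, and your account of them (``bookkeeping when splitting the quadratic'' or ``a union/triangle-inequality step'') is speculative; if you ever need to reproduce the proof rather than cite it, you should follow that appendix rather than rely on this heuristic.
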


Note that Theorem\,\ref{thm:convergence_barron_functions} omits one of the proved results stated in Theorem\,4 of \cite{ma20}. Also consult the proof of the theorem in \cite{ma20}, App.\,C.2.


\section{The Delay-SDE-net}
\label{sect:sdde_net_theory}


The Delay-SDE-net is developed to capture time-lagged dependencies between inputs to a neural network model, as well as to describe potential errors in modelling using the concepts of aleatoric and epistemic uncertainty.

The original SDE-net \citep{kong20} uses one previous discretization step to estimate the next. This method neglects potential memory in data, meaning that information may be lost. The SDDE with multiple delays (presented in Section\,\ref{subsect:sdde_theory})
gives a mathematical framework to generalize the original SDE-net to what we call the Delay-SDE net. That is, the Delay-SDE-net allows for estimation based on $p$ previous discretization steps, adding information about trends or rapid changes, making the model more capable of understanding the present situation.  The way Delay-SDE-net takes advantage of information from lagged time values resembles the family of (linear) ARMA models. However, Delay-SDE-net
has the added advantage of modelling drift and diffusion using neural networks, where nonlinear relationships can be represented, meaning the hypothesis space $\mathcal{H}$ of the model is extended.

 Additionally, as we will see in Section \ref{subsect:training_algorithm}, the Delay-SDE-net models the diffusion coefficient using two neural networks, one for the aleatoric uncertainty and one for the epistemic uncertainty, allowing it to distinguish between the two uncertainty types. The way these neural networks are trained makes the Delay-SDE-net part of the group of single deterministic methods for uncertainty quantification, meaning that uncertainty estimates are given instantly without being delayed by a series of simulations.

\subsection{Theoretical setup of the Delay-SDE-net}
\label{subsect:theoretical_setup}

As defined in \cite{kong20}, the original SDE-net constraints a neural network model using an It{\^o} diffusion of the form 
\begin{align*}
    d{X}(t) = f(t,X(t))dt + g(t,X(t))dB(t),
\end{align*}
where $B(t) = \{B(t)\}_{t\geq 0}$ is a one-dimensional Brownian motion process \citep[see for example][]{oksendal03}. In this section, we define a generalized version of the SDE-net. The extended modelling framework is referred to as the Delay-SDE-net, as the neural network model is constrained by an SDDE (see Sect.\,\ref{subsect:sdde_theory}). The Delay-SDE-net generalizes the original SDE-net to consider autoregressive dependencies between one or multiple lags in a multidimensional dynamical system.

When defining the Delay-SDE-net, we assume that given data are generated by an SDDE as given in Eq.\,\eqref{eq:sdde_partitioned}, To make the modelling framework reasonably simple, we require some constraints on the SDDE. That is, we assume that given data are generated by a stochastic process, $X(t)$, given by the solution of the SDDE
\begin{align}
    \label{eq:data_generating_sdde}
    X(t) = \begin{cases}
    & \eta(0) + \int_0^t f(s,\Pi(X_s))ds + \int_0^t g(s,\Pi(\eta(0)))dW(s) ,\quad t\geq 0,\\
    & \eta(t),\quad -\tau\leq t < 0,
    \end{cases}
\end{align}
where $\Pi:C\to \R^{d p}$ is as defined in Eq.\,\eqref{eq:the_projection}. Notice that the diffusion is state-independent, as it is only dependent on the initial path $\eta:\Omega \to C$. This assumption is made to obtain a simplified discretization scheme of the model. That is, we obtain $F^i=0$, for all $i=1,\ldots , d$, in Eq.\,\eqref{eq:multi_dim_milstein_scheme}, meaning that the Milstein scheme is equivalent to to Euler-Maruyama scheme \citep{KloedenPeterE1992Nsos}.

In the following, we define the model that will be referred to as the Delay-SDE-net. Assuming that given data are generated by an SDDE on the form as given in Eq.\,\eqref{eq:data_generating_sdde}, it is reasonable to construct a model
\begin{align}
    \label{eq:continuous_nn_sdde}
    X^{(m)}(t) = \begin{cases}
    & \eta(0) + \int_0^t f_m(s,\Pi(X_s^{(m)}))ds + \int_0^t g_m(s,\Pi(\eta(0)))dW(s) ,\quad t\geq 0,\\
    & \eta(t),\quad -\tau\leq t < 0,
    \end{cases}
\end{align}
with initial path $\eta:\Omega \to C$ identical to $\eta$ in Eq.\,\eqref{eq:data_generating_sdde}, for the evolution of the dynamical system that the data represent. In Eq.\,\eqref{eq:continuous_nn_sdde}, $f_m$ is given by an $m$-neuron neural network. Further, inspired by the discussion in Sect.\,\ref{subsect:model_uncertainties}, we define 
\begin{align}
    \label{eq:stochastic_part_delay_sde_net}
    g_m \coloneqq g_{a,m} + g_{e,m},
\end{align}
where $g_{a,m}$ and $g_{e,m}$ are given by two independent $m$-neuron neural networks, respectively representing aleatoric and epistemic uncertainty.

Note that, to ease notation, we have assumed that the neural networks $f_m$, $g_{a,m}$ and $g_{e,m}$ are trained using the same number of neurons, $m$. This assumption can be omitted without problems in applications. Further, to analyze the theoretical error bound of the Delay-SDE-net, we will work with $f_m$, $g_{a,m}$ and $g_{e,m}$ as two-layer neural networks in Section\,\ref{sect:convergence_theory}. In the empirical analyses and applications of Section\,\ref{sect:Analyses_and_application}, as well as for applications in general, the neural networks are not restricted to only two layers. Note that we use two-layer neural networks in all analyses of Section\,\ref{sect:Analyses_and_application} of this work for simplicity. Finally, as introduced in Section\,\ref{subsect:nn_barron_space_theory}, it is important to note that the model in Eq.\,\eqref{eq:continuous_nn_sdde} is a continuous-time model, however, $f_m$ and $f_\rho$ are different functions, as $f_\rho$ is the $m\to\infty$ limit of $f_m$.

The continuous-time version of the Delay-SDE-net in Eq.\,\eqref{eq:continuous_nn_sdde} is estimated using its discrete-time approximation 
\begin{align}
    \label{eq:discretized_nn_sdde}
    \Delta X^{j,\pi,(m)}(t_{k}) =   f_m^{j}(t_{k-1},\Pi(X_{t_{k-1}}^{\pi,(m)}))\Delta t +  g_m^{j}(t_{k-1},\Pi(\eta^{\pi}(0)))\Delta W^j(t) ,\quad t\geq 0,
\end{align}
for $j=1,\ldots ,d$, where $g_m^{j}\coloneqq g_{a,m}^{j}+g_{e,m}^{j}$. Note that this discrete-time approximation is based on the Milstein scheme in Eq.\,\eqref{eq:multi_dim_milstein_scheme}, and that the split of the diffusion term was initially defined in Eq.\eqref{eq:stochastic_part_delay_sde_net}. In this case, the observed process $\eta^\pi$ is, for example, taken as a piecewise linear approximation of $\eta$ using the defined partition in Eq.\,\eqref{discretization_partition}. Also note that, for simplicity, we assume that there is one independent driving process for each element of the $d$-dimensional process $X(t)$, meaning that $W(t)$ takes values in $\R^d$. As a result, $g$ and $g_m$ can be taken as vector functions $g\in\R^d$ and $g_m\in\R^d$ respectively (when applying the Hadamard product in vector notation). Then, as long as each $g^j$ (and $g^j_m$), $j=1,\ldots ,d$, satisfies required conditions for existence and uniqueness, and required regularity conditions for discretization, the component-wise stochastic parts of the SDDEs in Eq.\,\eqref{eq:data_generating_sdde} (and Eq.\,\eqref{eq:continuous_nn_sdde}) are $d$ independent $1$-dimensional processes. 

The Delay-SDE-net modelling framework defined in this section can be trained for applications using the methodology in the following section.

\subsection{The training algorithm}
\label{subsect:training_algorithm}

This section presents a methodology for training the Delay-SDE-net. We will see that the deterministic and stochastic parts of Eq.\,\eqref{eq:continuous_nn_sdde} are trained as separate neural networks, and that several new approaches are used compared to related literature.

Sections\,\ref{subsubsect:train_f}, \ref{subsubsect:train_g_a} and \ref{subsubsect:train_g_e} explain how separate neural networks are trained for the deterministic part, $f_m$, and stochastic parts, $g_{a,m}$ and $g_{e,m}$, of the Delay-SDE-net. Note that the the stochastic part is separated into two neural networks as shown in Eq.\,\eqref{eq:stochastic_part_delay_sde_net}, making the Delay-SDE-net a composition of three neural networks. To train the deterministic and stochastic parts separately differs from the approach used for the SDE-net \citep{kong20} as well as its other variants \citep{hayashi22, wang21_1, wang21_2, yang21}. The methodology of separate neural networks makes the training process for $f_m$ more stable since the training of $g_{a,m}$ and $g_{e,m}$ do not affect the deterministic part \citep{gawlikowski2021survey}. In addition, the training of $g_{a,m}$ also becomes more stable, as it is trained on constant residuals from a final optimized $f_m $ (see Section\,\ref{subsubsect:train_g_a}). The latter is shown in \cite{raghu2019direct}, which argues that the uncertainty estimation is biased by the prediction task when both are estimated using a single neural network. Training of two separate neural networks for prediction and uncertainty estimations was also done by \cite{hsu2020generalized}; \cite{oberdiek2018classification}; \cite{lee2020gradients} and \cite{ramalho2020density}, however only for the classification task.

The presented training methodology for $f_m$, $g_{a,m}$ and $g_{e,m}$ in the respective Sections\,\ref{subsubsect:train_f}-\ref{subsubsect:train_g_e} holds for a Delay-SDE-net giving one-day-ahead predictions (predicting one step forward in time). In Section\,\ref{subsubsect:prediction_N_steps_forward}, we present a training methodology for a Delay-SDE-net that predicts $N$ steps forward in time. The training algorithm is given in Algorithm \ref{alg:non-image}, and a corresponding illustration is given in Figure \ref{fig:training}. The code for the algorithm can be found on
 \url{https://github.com/alimid/Delay-SDE-net}.

\begin{figure}[hbt!]
    \centering
    \includegraphics[width=10cm]{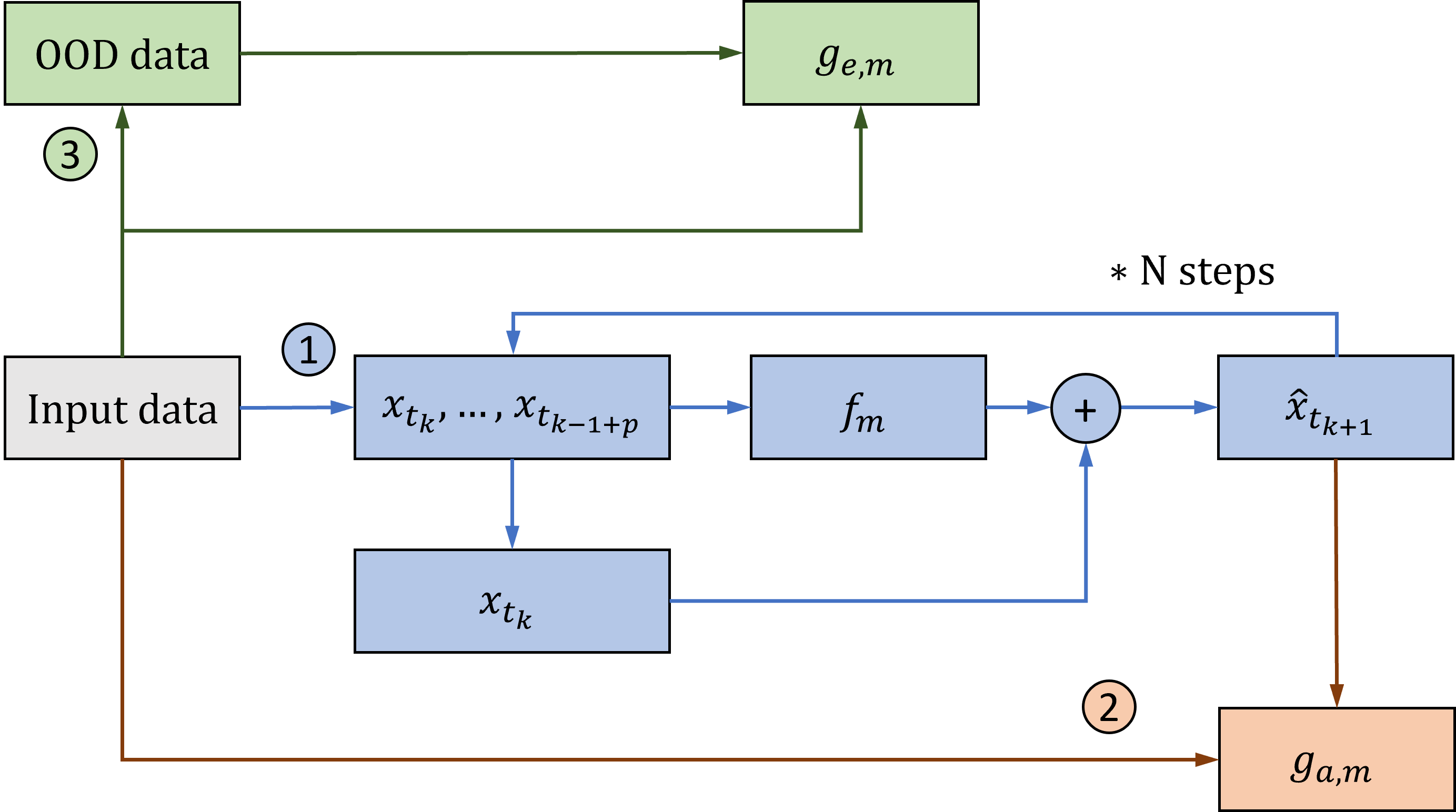}
    \caption{Illustration of the training algorithm for the Delay-SDE-net. The drift term $f_m$ is trained first. Secondly,  the residuals from the drift term is used to train the aleatoric diffusion term $g_{a,m}$. Third, the epistemic diffusion term $g_{e,m}$ is trained separately from the other two terms only on the input data and created OOD data.}
    \label{fig:training}
\end{figure}

\subsubsection{The drift net}
\label{subsubsect:train_f}

The drift net $f_m$, is trained deterministically with time and previous states as input. This would be comparable to a delayed ODE-net, an extended version of the ODE-net introduced in \cite{chen2018neural}. The recent publication \cite{yuan22} introduces a delayed ODE-net where historical information adds to the model as an initial state, an approach different from the method introduced in this work where new historical information is added continuously.

First, note that the discretized Delay-SDE-net in Eq.\,\eqref{eq:discretized_nn_sdde} is used for training. To train $f_m$, we focus on predicting the expected value and ignoring the stochastic term. That is, the optimal drift net is found by minimizing the expected prediction error
\begin{align*}
\min_{\theta_{f_{m}}}E_{\mathcal{D}_0}\left[\norm{\hat{\bs{x}}_{t_{k}}-\bs{x}_{t_{k}}}_2\right],
\end{align*}
where $\mathcal{D}_0\coloneqq\{\bs{x}_{t_{k}}\in\R^d:\quad t_{k}\in [0,T]\}$ is the training data set, $\theta_{f_{m}}$ represents the drift net parameters, and
\begin{align}
\label{eq:training_values_of_drift_net}
\hat{\bs{x}}_{t_{k}}=\bs{x}_{t_{k-1}}+f_m(t_{k-1},\bs{x}_{t_{k-p}},\ldots, \bs{x}_{t_{k-1}})\Delta t.
\end{align}
Inspired by Eq.\eqref{eq:discretized_nn_sdde} we use predictions $\hat{\bs{x}}_{t_{k}}^{\text{optimal}}$ from the optimized drift net to train the aleatoric diffusion net $g_{a,m}$. That is, we use the relation
\begin{align}
\label{eq:optimized_values_of_drift_net}
    X^{\pi,(m)}(t_k) - g_{m}(t_{k-1},\cdot)\circ\Delta W(t) \coloneqq  \hat{\bs{x}}_{t_{k}}^{\text{optimal}} = \bs{x}_{t_{k-1}}+f_m(t_{k-1},\bs{x}_{t_{k-p}},\ldots, \bs{x}_{t_{k-1}})\Delta t,
\end{align}
where $\circ$ is the Hadamard product and we assume $\Delta W\sim N(0,1)^d$. See the following section.

\subsubsection{The aleatoric diffusion net}
\label{subsubsect:train_g_a}

Training of the Delay-SDE-net diffusion term happens when iterations for the drift term are finished. As already mentioned, we train the aleatoric diffusion net, $g_{a,m}$, and the epistemic diffusion net, $g_{e,m}$, separately. The aleatoric diffusion net should capture patterns in data representing the standard deviation of the stochastic noise, corresponding to a measure of the drift net aleatoric uncertainty.

Since the Delay-SDE-net is driven by a (standard) Brownian motion process (see Eq.\,\eqref{eq:optimized_values_of_drift_net}), the stochastic term corresponding to the aleatoric uncertainty is distributed as
\begin{align*}
    \int_0^T g_{a,m}(t,\cdot)dW(t)\sim N\left(0,\int_0^T g_{a,m}^2(t,\cdot)dt\right),
\end{align*}
see for example \cite{BenthFredEspen2004OTwS}, meaning that the approximated variance one step forward in time is given by
\begin{align}
    \label{eq:theoretical_variance_approximation}
    \int_{t_{k}}^{t_{k+1}} g_{a,m}^2(t,\cdot)dt \simeq  g_{a,m}^2(t_k,\cdot)\Delta t ,
\end{align}
where $g_{a,m}(t_k,\cdot)$ is the discretized aleatoric diffusion net in Eq.\,\eqref{eq:discretized_nn_sdde}. We train $g_{a,m}$ on the residuals $\bs{e}_{t_k} = \hat{\bs{x}}_{t_k}^{\text{optimal}} - \bs{x}_{t_k}$ of the predicted values in Eq.\,\eqref{eq:optimized_values_of_drift_net}, and optimize as
\begin{align*}
\min_{\theta_{g_{a,m}}}E_{\mathcal{D}_0}\left[\norm{g_{a,m}^2(t_k,\cdot)\Delta t-\bs{e}_{t_k}^2}_2\right],
\end{align*}
where $\theta_{g_{a,m}}$ represents the aleatoric diffusion net parameters. Note that $\Delta t$ is explicitly given and remains constant during training. Due to this methodology, the optimized aleatoric diffusion net $g_{a,m}$ corresponds to the predicted standard deviation of the model in Eq.\,\eqref{eq:optimized_values_of_drift_net}.

A model allowing time-dependent variance is useful when dealing with heteroscedastic data sets, that is, data sets with nonconstant variance. Typical examples include seasonal behaviour in modelling of weather \citep{eggen2021}, as well as intraday fluctuations in modelling of stock returns \citep{andersen19}. Note that prediction residuals include model error as well as inherent randomness, meaning that $g_{a,m}$ is optimized to capture both types of uncertainty. As discussed in Section\,\ref{subsect:model_uncertainties}, $g_{a,m}$ can not differentiate which parts of the true function that are in the hypothesis space $\mathcal{H}$ and not (see Figure\,\ref{fig:epistemic_figure}). Therefore, $g_{a,m}$ must be considered as modelling the aleatoric noise given the model $g_{a,m}=E[g_a|f_m]$. Even though this is rarely mentioned in literature, model uncertainty is part of all models with diffusion optimzed on model residuals, such as for example the vector autoregressive (VAR) model in \cite{eggen2021}. 

\subsubsection{The epistemic diffusion net}
\label{subsubsect:train_g_e}

The final stage in training the Delay-SDE-net is to train the neural network $g_{e,m}$, corresponding to epistemic uncertainty. In Section\,\ref{subsect:model_uncertainties}, epistemic uncertainty is defined to be composed of two types of uncertainty; approximation uncertainty and model uncertainty. As explained in Section\,\ref{subsubsect:train_g_a}, due to the training methodology of the aleatoric diffusion net $g_{a,m}$, model uncertainty is already part of the model. That is, the trained epistemic diffusion net should reflect approximation uncertainty only.

Approximation error origins from limited availability of training data from the whole hypothesis space, meaning the real distribution is not covered well enough. Based on Section\,\ref{subsect:model_uncertainties}, the epistemic diffusion net should ideally be able to differentiate between data from the hypothesis space it is trained on, $\mathcal{H}_0$, and data it has not seen before, $\not\subset\mathcal{H}_0$ (see Figure\,\ref{fig:epistemic_figure}). We train $g_{e,m}$ with a similar approach as used for the SDE-net \citep{kong20}. That is, we use a classifier distinguishing between ID data and OOD data, and optimize on
\begin{align}
    \label{eq:loss_epistemic_diffusion_net_prob}
\min_{\theta_{g_{e,m}}}E_{\mathcal{D}_0}\left[g_{e,m}^{\text{prob}}(t_k,\cdot)\right]+\max_{\theta_{g_{e,m}}}E_{\tilde{\mathcal{D}}_0}\left[g_{e,m}^{\text{prob}}(t_k,\cdot)\right],
\end{align}
where $\tilde{\mathcal{D}}_0$ represents the OOD data set. Note that the function optimized in Eq.\,\eqref{eq:loss_epistemic_diffusion_net_prob} is $g_{e,m}^{\text{prob}}\in [0,1]^{d}$, and not $g_{e,m}\in\R^{d}$ in Eq.\,\eqref{eq:discretized_nn_sdde}. That is because, when using a probabilistic classification task between the two classes ID-data and OOD-data, the trained network $g_{e,m}^{\text{prob}}$ describes the probability of an observation to be OOD. The epistemic standard deviation, $g_{e,m}$, in Eq.\,\eqref{eq:stochastic_part_delay_sde_net} is finally approximated as $g_{e,m} = \sigma_e g_{e,m}^{\text{prob}}$, $\sigma_e\in\R$, where the appropriate constant $\sigma_e$, representing the maximum value of the epistemic uncertainty, is either set by using knowledge about the created OOD data, or computed as a tuning parameter. In both cases the magnitude of $\sigma_e$ should control the magnitude of the epistemic uncertainty, and balance it in conjunction with the magnitude of the aleatoric uncertainty. When $\sigma_e$ is a tuning parameter we minimize 
\begin{align*}
\min_{\sigma_e}E_{\mathcal{D}}\left[\norm{(\sigma_e g_{e,m}^{\text{prob}}(t_k,\cdot) + g_{a,m}(t_k,\cdot)\sqrt{\Delta t})^2 - \bs{e}_{t_k}^2}_2\right],
\end{align*}
where $g_{a,m}(t_k,\cdot)\sqrt{\Delta t}$ represents the trained aleatoric standard deviation and $\mathcal{D}$ is a validation data set.

If one has access to real OOD data, this could be used to train $g_{e,m}^{\text{prob}}$.  In many applications, however, an OOD data set is rarely available, meaning the OOD data should be created artificially.
The approach used to create OOD data in training of the SDE-net is to draw random data points from the training data set and add Gaussian noise.  That is, the SDE-net is trained using an OOD data set $\tilde{\mathcal{D}}_0$ with elements $\tilde{\bs{x}}_{t_k}=\bs{x}_{t_k}+\bs{\epsilon}_{t_k}$, where $\bs{x}_{t_k}\in\mathcal{D}_0$ and $\bs{\epsilon}_{t_k}\sim N(0,1)^d$. As seen in Figure\,\ref{fig:ood_sde}, this method does not guarantee creation of OOD data as they might overlap with the training data. Therefore, the Delay-SDE-net is instead trained on OOD data created according to the soft Brownian offset method presented in \cite{moller2021out}. This method draws random data points from the training data $\bs{x}_{t_k}\sim \mathcal{D}_0$ and iteratively adds Gaussian noise, $\bs{e}=[e_1,\ldots ,e_d]\sim N(\mu,\sigma)^d$, until a prespecified minimum distance $d^-$ from all data points in the training data set is obtained. Simplified, the method does: \\

\begin{algorithmic}
    \While{$d^*<d^-$}
    \State $\tilde{\bs{x}}_{t_k}=\tilde{\bs{x}}_{t_k}+d^+\bs{e},$
\EndWhile \\
\end{algorithmic}

\noindent where $d^*$ is the distance from $\tilde{\bs{x}}_{t_k}$ to the closest data point in $\mathcal{D}_0$, and $d^+$ is a prespecified length of the offset for the added noise. Data from the soft Brownian offset method, $\tilde{\mathcal{D}}_0=\{\tilde{\bs{x}}_{t_1},\ldots,\tilde{\bs{x}}_{t_{K}}\}$, is guaranteed to be out of distribution from the given training data, as illustrated in Figure\,\ref{fig:ood_delaysde}.

Note that the method used to create OOD data should be tailored to the specific use case, to make sure that potential rare outcomes in the specific application are covered. In the soft Brownian offset method one can for example adjust the minimum distance, $d^{*}$, between ID and OOD data (which has to increase exponentially with the number of dimensions in the data set due to curse of dimensionality), or adjust the spread of the OOD data by changing the offset, $d^{+}$. Further, one can specify what properties in ID data that the OOD data should diverge from. In this work we chose to generate OOD data covering two specific properties. First, we generate OOD data reflecting abnormal local variability in the time series. That is, the soft Brownian offset method is sensitive to differences between lags, because the OOD data set is constructed according to $[\tilde{\bs{x}}_{{t}_k},\ldots ,\tilde{\bs{x}}_{t_{k+1-p}}]^T=[\bs{x}_{{t}_k}+d^+ \bs{e}_1,\ldots ,\bs{x}_{t_{k+1-p}}+d^+ \bs{e}_p]^T$, where $\bs{e}_i\sim N(\mu,\sigma)^d$ for $i=1,\ldots p$. Secondly, we generate OOD data reflecting abnormal magnitude of data points at a given time. That is, the OOD data set is constructed according to $\tilde{\bs{x}}_{{t}_k}=\bs{x}_{{t}_k}+d^+\bs{e}$.

\begin{figure}[hbt!]
\centering
\begin{subfigure}{.49\textwidth}
    \centering
    \includegraphics[width=.95\linewidth]{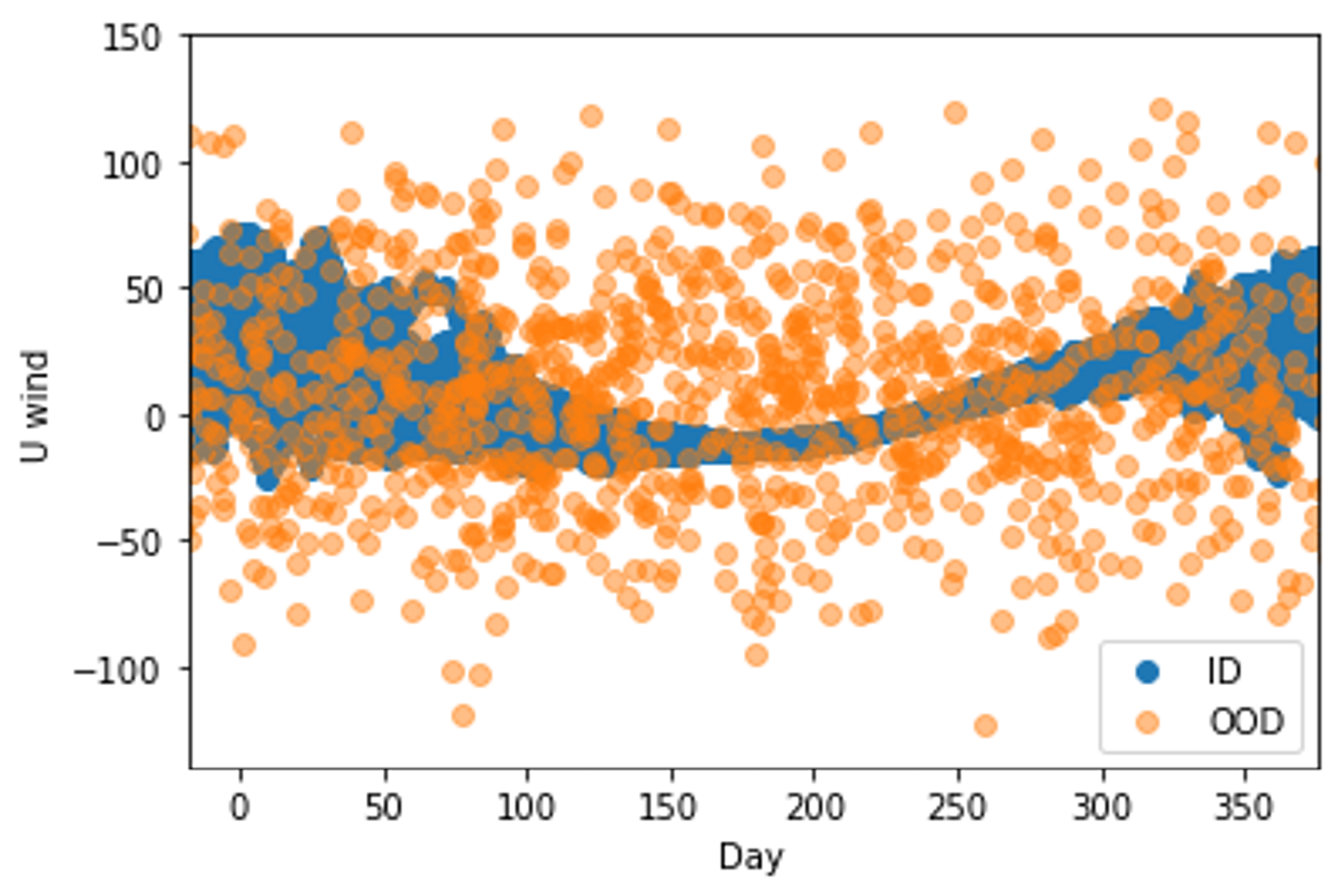}  
    \caption{}
    \label{fig:ood_sde}
\end{subfigure}
\begin{subfigure}{.49\textwidth}
    \centering
    \includegraphics[width=.95\linewidth]{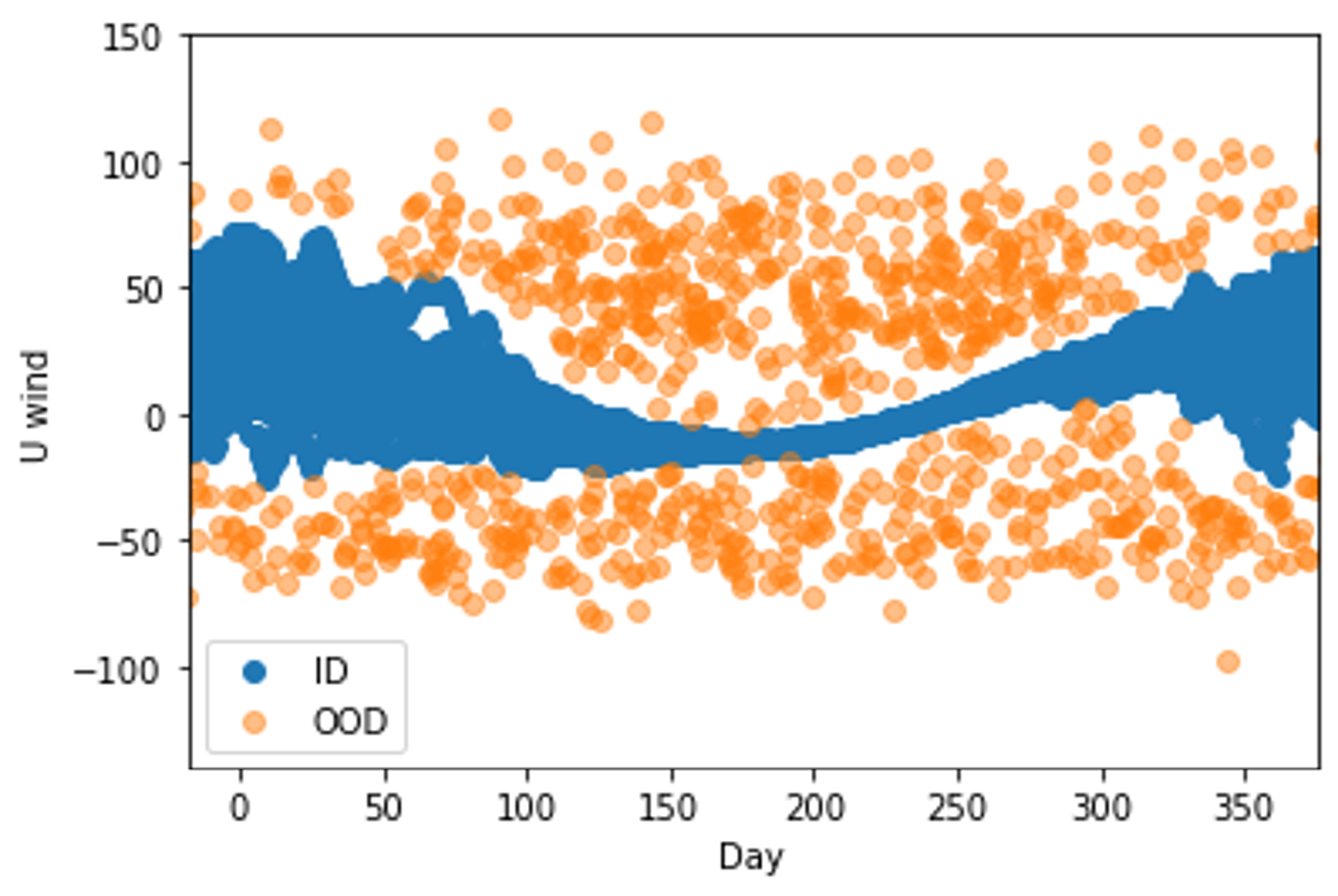}  
    \caption{}
    \label{fig:ood_delaysde}
\end{subfigure}
\caption{OOD data created for  stratospheric wind using (a) Gaussian random noise (b) soft Brownian offset method. The stratospheric wind data is described in Section \ref{subsect:case_study}. The blue points are scattered U wind data over 30 years, while the yellow points are 2000 artificially created OOD data points.}
\label{fig:id_ood}
\end{figure}

\subsubsection{The N steps ahead Delay-SDE-net}
\label{subsubsect:prediction_N_steps_forward}
With small modifications in the training methodology of the Delay-SDE-net presented in Sections\,\ref{subsubsect:train_f}-\ref{subsubsect:train_g_e}, a model predicting $N$ steps forward in time given an initial path can be achieved.

The drift net of the $N$-steps ahead Delay-SDE-net is trained by the optimization
\begin{align*}
\min_{\theta_{f_m}}E_{\mathcal{D}_0}\left[\norm{\hat{\bs{x}}_{t_{k+N}}-\bs{x}_{t_{k+N}}}_2\right],
\end{align*}
where $\hat{\bs{x}}_{t_{k+N}}$ is predicted from $\bs{x}_{t_{k}}$ with a similar approach as in Section\,\ref{subsubsect:train_f}, Eq.\,\eqref{eq:training_values_of_drift_net}. The exact method is illustrated in Algorithm\,\ref{alg:non-image}. Also as in Section\,\ref{subsubsect:train_f}, Eq.\,\ref{eq:optimized_values_of_drift_net}, we obtain an optimal prediction $\hat{\bs{x}}_{t_{k+N}}^{\text{optimal}}$ from the trained network $f_m$ to use in training of the aleatoric diffusion net, $g_{a,m}$.

As in Section\,\ref{subsubsect:train_g_a}, the aleatoric diffusion net is trained using the residuals $\bs{e}_{t_{k+N}} = \hat{\bs{x}}_{t_{k+N}}^{\text{optimal}} - \bs{x}_{t_{k+N}}$. However, the approximated variance for an $N$ steps forward prediction differs from Eq.\,\eqref{eq:theoretical_variance_approximation}. That is, the approximated (time-dependent) variance $N$ steps forward in time is given by
\begin{align*}
    \int_{t_{k}}^{t_{k+N}} g_{a,m}^2(t,\Pi(\eta(t_k)))dt &\simeq  \sum_{i=0}^{N-1}g_{a,m}^2(t_{k+i},\bs{x}_{t_k},\ldots, \bs{x}_{t_{k+1-p}})\Delta t\\
    &\coloneqq V(t_k,\bs{x}_{t_k},\ldots, \bs{x}_{t_{k+1-p}})\Delta t,
\end{align*}
where $V$ is taken as the sum over point variances from time $t_k$ to $t_{k+N}$. Remember that, by definition of the Delay-SDE-net in Section\,\ref{subsect:theoretical_setup}, the model diffusion is state-independent, and therefore only dependent on the initial (observed) path. Further, notice that $V$ only depends on the first time step $t_k$, in addition to observed point values $\bs{x}_{t_k},\ldots, \bs{x}_{t_{k+1-p}}$. This holds in estimation since $t_k$ can be seen as a sufficient estimator for a given time vector $\bs{\tau}^T=[t_k,t_{k+1},\ldots,t_{k+N}]$, as long as $N$ and $\Delta t$ remain constant for all training iterations. This is because, if two observed first time steps are equal $t_a$=$t_b$, then the two time vectors are also equal $\bs{\tau}^T_a=\bs{\tau}^T_b$ in this case, which means $t_k$ fulfils the requirements of a sufficient estimator for $\bs{\tau}^T$ given in \cite{casella2021statistical}, Def.\,6.2.1. Therefore, the N steps ahead variance is trained on the optimization
\begin{align*}
\min_{\theta_{g_{a,m}}}E_{\mathcal{D}_0}\left[\norm{V(t_k,\cdot)\Delta t-(\bs{e}_{t_{k+N}})^2}_2\right].
\end{align*}

Finally, the epistemic diffusion net classifier of the N steps ahead Delay-SDE-net, $g_{e,m}^{\text{prob}}$, is trained exactly as in Section\,\ref{subsubsect:train_g_e}, Eq.\,\ref{eq:loss_epistemic_diffusion_net_prob}, using only the initial time $t_k$ in training. That is, we assume
\begin{align*}
g_{e,m}^{\text{prob}}(t_{k+N}) = g_{e,m}^{\text{prob}}(t_k),
\end{align*} 
because information from ID and OOD training data sets are limited to the initial time $t_k$, meaning we assume that $g_{e,m}^{\text{prob}}$ is constant over the prediction steps from $t_k$ to $t_{k+N}$. The final epistemic diffusion net $g_{e,m}=\sigma_e^{(N)}g_{e,m}^{\text{prob}}$, $\sigma_e^{(N)}\in\R$, of the N steps ahead Delay-SDE-net can be trained with $\sigma_e^{(N)}$ as a tuning parameter on
\begin{align} \label{eq:min_sigma}
\min_{\sigma_e^{(N)}}E_{\mathcal{D}}\left[\norm{(\sigma_e^{(N)} g_{e,m}^{\text{prob}}(t_k,\cdot) + \sqrt{V(t_k,\cdot)\Delta t})^2 - \bs{e}_{t_{k+N}}^2}_2\right],
\end{align}
where $\sqrt{V(t_k,\cdot)\Delta t}$ represents the trained aleatoric standard deviation.

\begin{algorithm}
\caption{Delay-SDE-net. $p$: Number of lags, $N$: Number of  time steps forward to predict, $\pi_s$: Sampled time points in minibatch, $\Delta t$: Length of time step}\label{alg:non-image}
\begin{algorithmic}
\State Initialize $f_m$, $g_{a,m}$ and $g_{e,m}$.
\For{\# $f_m$ training iterations}
\State Sample minibatch  \{$[\bs{x}_{t_k},\ldots ,\bs{x}_{t_{k+1-p}}]^T\}_{k=0}^{n}$, $\bs{x}_{t_k}\in \R^d$ for $t_k\in \pi_s$
\For{all t}
    \State $\hat{\bs{x}}_{t_{k+1}}=\bs{x}_{t_k}+f_m(\bs{x}_{t_k},\ldots ,\bs{x}_{t_{k+1-p}},t_k)\Delta t$
    \State $\hat{\bs{x}}_{t_{k+2}}=\hat{\bs{x}}_{t_{k+1}}+f_m(\hat{\bs{x}}_{t_{k+1}},\ldots,\bs{x}_{t_{k+2-p}},t_{k+1})\Delta t$
    \State \vdots
    \State $\hat{\bs{x}}_{t_{k+N}}=\hat{\bs{x}}_{t_{k+N-1}}+f_m(\hat{\bs{x}}_{t_{k+N-1}},\ldots,\hat{\bs{x}}_{t_{k+N-p}},t_{k+N-1})\Delta t$
\EndFor
\State Update $f$ by:
\State $\nabla_{f_m}L((\bs{x}_{t_{k+N}},\hat{\bs{x}}_{t_{k+N}})_{\forall t_k \in \pi_s})$
\EndFor
\item[]
\For{\# $g_{a,m}$ training iterations}
\State Update $g_{a,m}$ by:  \State$\nabla_{g_{a,m}}L((g_{a,m}^2(\bs{x}_{t_k},\ldots,\bs{x}_{t_{k+1-p}},t_k)\Delta t,e_{t_{k+N}}^2)_{\forall t_k \in \pi_s}), e_{t_{k+N}}=\bs{x}_{t_{k+N}}-\hat{\bs{x}}_{t_{k+N}}$
\EndFor
\item[]
\For{\#  $g_{e,m}$  training iterations}
\State Sample minibatch of OOD data \{$[\tilde{\bs{x}}_{t_k},\ldots ,\tilde{\bs{x}}_{t_{k+1-p}}]^T\}_{k=0}^{n}$, $\tilde{\bs{x}}_{t_k}\in \R^d$ for $t_k\in \tilde{\pi}_s$
\State Update $g_{e,m}$ by:
    \State $\nabla_{g_{e,m}}g_{e,m}(\tilde{\bs{x}}_{t_k},\ldots,\tilde{\bs{x}}_{t_{k+1-p}},t_k)_{\forall t_k \in \tilde{\pi}_s} - \nabla_{g_{e,m}}g_{e,m}(\bs{x}_{t_k},\ldots,\bs{x}_{t_{k+1-p}},t_k)_{\forall t_k \in \pi_s}$
\EndFor
\end{algorithmic}
\end{algorithm}


\section{Theoretical error of the two-layer Delay-SDE-net}
\label{sect:convergence_theory}


In Section\,\ref{subsect:theoretical_setup}, we introduced the Delay-SDE-net (Eq.\,\eqref{eq:continuous_nn_sdde}) and presented its discretized version and real-world counterpart in Eq.\,\eqref{eq:data_generating_sdde} and Eq.\,\eqref{eq:discretized_nn_sdde} respectively. The aim of this section is to derive a theoretical error and convergence result for the two-layer Delay-SDE-net.

As training of the Delay-SDE-net is performed using its discrete-time version, the error result of the model has to be derived through two approximation steps. That is, first we approximate the assumed real-world SDDE model using the continuous-time Delay-SDE-net, then we approximate the continuous-time Delay-SDE-net model using its discretized version. Necessary assumptions on the real-world SDDE and Delay-SDE-net model coefficients follow.

\begin{assum}
\label{assum:sdde_coefficients}
Given an SDDE as in Eq.\,\eqref{eq:data_generating_sdde}, with unique solution, we assume that $f^j,g^j\in \mathcal{B}$, $j=1,\ldots ,d$, are bounded functions on some bounded domain $D\subset T\times \R^{d p}$.
\end{assum}

\begin{assum}
\label{assum:Delay-SDE-net_coefficients}
Given a continuous-time Delay-SDE-net as in Eq.\,\eqref{eq:continuous_nn_sdde}, assume that $\eta\in C$ is of bounded variation and is $(\gamma/2)$-H{\"o}lder continuous, where $0<\gamma \leq 1$. Further, assume that $f^j_m,g^j_m\in C^{1,2}(T\times \R^{d p},\R)$, $j=1,\ldots , d$, are bounded functions having bounded first and second spatial derivatives.
\end{assum}

A convergence result derived and presented in \cite{mohammed01} and \cite{mohammed04} for the Milstein scheme of the dynamics in Eq.\,\eqref{eq:sdde_partitioned}, is used to express the convergence rate between the discretized Delay-SDE-net in Eq.\,\eqref{eq:discretized_nn_sdde} and the continuous-time Delay-SDE-net in Eq.\,\eqref{eq:continuous_nn_sdde}. By \cite{mohammed04}, Thm.\,5.2 and Remark in Sect.\,5.3, and by Assumption\,\ref{assum:Delay-SDE-net_coefficients}, we have that if 
\begin{align*}
    \sup_{-\tau\leq t\leq 0}\norm{\eta^{\pi}(t) - \eta(t)}_2\leq C_\eta\Delta t^\gamma,
\end{align*}
where $C_\eta$ is a positive constant, there exists a constant $C_T>0$, dependent on $T$ and independent of $\Delta t$, such that
\begin{align}
    \label{eq:convergence_rate_sdde}
    \sup_{-\tau \leq t \leq T}\norm{X^{\pi,(m)}(t) - X^{(m)}(t)}_{L_2}^2\leq C_T\Delta t^{2\gamma}.
\end{align}
As we will see in the end of this section (Corollary\,\ref{cor:barron_convergence}), the convergence result in Eq.\,\eqref{eq:convergence_rate_sdde} is a crucial element in obtaining a final error bound for $X^{\pi,(m)}$, compared to the real world model $X$. First, we have to derive an error bound of the continuous-time Delay-SDE-net, $X^{(m)}$. The following theorem gives an error bound of $X^{(m)}$, compared to the real-world SDDE, $X$. We will see that the Barron space theory introduced in Section\,\ref{subsect:nn_barron_space_theory}, together with an upper bound of the epistemic uncertainty, $g_{e,m}$, coming from the training algorithm in Section\,\ref{subsubsect:train_g_e} (Section\,\ref{subsubsect:prediction_N_steps_forward} for N steps ahead prediction), are used to obtain the final result. Finally, note that the result in Theorem\,\ref{thm:continuous_Delay-SDE-net_error} generalizes to a state-dependent diffusion term, compared to the assumed real-world model in Eq.\,\eqref{eq:data_generating_sdde} where the diffusion term is only dependent on the initial path.

\begin{theorem}
    \label{thm:continuous_Delay-SDE-net_error}
    Given an SDDE as in Eq.\,\eqref{eq:sdde_partitioned} and its associated (continuous-time) two-layer Delay-SDE-net, under Assumption\,\ref{assum:sdde_coefficients} and \ref{assum:Delay-SDE-net_coefficients}, the error bound of the model is given by
    \begin{align*}
        \norm{X^{(m)}(t) - X(t)}_{L_2}\leq C_m ,
    \end{align*}
    where 
    \begin{align*}
    C_m \coloneqq \left(4T(C_{g_e} + 2C_B\epsilon_F + \frac{3C_\sigma}{m}(2C_{g}+TC_f))e^{2pC_L(4+2T)t}\right)^{1/2}.
\end{align*}
    Here $m\in\mathbb{N}_+$ is the number of neurons in the two-layer neural networks $f_m$ and $g_{a,m}$ (or the minimum value neuron if the number of neurons is different in the two neural networks), $C_g,C_f>0$ are the upper bounds of $\norm{g}_\mathcal{B}^2$ and $\norm{f}_\mathcal{B}^2$ respectively, $C_{g_e}>0$ is the upper bound of $\norm{g_{e_m}}_{L_2}^2$ (see Sections\,\ref{subsubsect:train_g_e}-\ref{subsubsect:prediction_N_steps_forward}), $C_L>0$ is the Lipschitz constant of $f$ and $g$, $C_B>0$ is the maximum bound of the functions $(g_m-g)^2$ and $(f_m-f)^2$, and $\epsilon_F$ is a small probability representing rare events. Note that we set $\epsilon_F=0$ if the SDDE in Eq.\,\eqref{eq:sdde_partitioned} is defined on a bounded domain. Finally, we have that $C_\sigma = (\gamma(\sigma) + \min\{\abs{\sigma'(+\infty)},\abs{\sigma'(-\infty)}\} + \abs{\sigma(0)})^2$, where $\sigma$ represents the activation function of choice (can be different for $f_m$, $g_{a,m}$ and $g_{e,m}$), and $\gamma(\cdot)$ is given in Definition\,\ref{def:the_gamma-function}.
\end{theorem}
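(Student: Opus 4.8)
The plan is a Gr\"onwall argument on the mean-square error of the two coupled integral equations, with the neural-network approximation error injected through Theorem \ref{thm:convergence_barron_functions}. First I would subtract Eq.\,\eqref{eq:continuous_nn_sdde} from Eq.\,\eqref{eq:sdde_partitioned}; for $t<0$ both processes equal $\eta$, so it suffices to treat $t\ge 0$, where $X^{(m)}(t)-X(t)=\int_0^t\big(f_m(s,\Pi(X_s^{(m)}))-f(s,\Pi(X_s))\big)ds+\int_0^t\big(g_m(s,\Pi(X_s^{(m)}))-g(s,\Pi(X_s))\big)dW(s)$, carrying the state-dependent version of the diffusion so as to obtain the claimed generalization. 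Applying $\norm{a+b}_{L_2}^2\le 2\norm{a}_{L_2}^2+2\norm{b}_{L_2}^2$, then Cauchy--Schwarz in $s$ on the drift integral and Doob's $L^2$-maximal inequality together with the It\^o isometry on the stochastic integral, and passing to the running supremum $u(t):=E\big[\sup_{-\tau\le r\le t}\norm{X^{(m)}(r)-X(r)}_2^2\big]$ (which equals $E[\sup_{0\le r\le t}\ldots]$ for $t\ge 0$ since the paths coincide on $[-\tau,0]$), I get $u(t)\le 2T\int_0^t E\norm{f_m(s,\Pi(X^{(m)}_s))-f(s,\Pi(X_s))}_2^2\,ds+8\int_0^t E\norm{g_m(s,\Pi(X^{(m)}_s))-g(s,\Pi(X_s))}_2^2\,ds$.

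Next I would split each integrand by the triangle inequality into a neural-network approximation part and a propagated-error part, i.e. $f_m(s,\Pi(X^{(m)}_s))-f(s,\Pi(X_s))=(f_m-f)(s,\Pi(X^{(m)}_s))+\big(f(s,\Pi(X^{(m)}_s))-f(s,\Pi(X_s))\big)$, and likewise for $g$, additionally using $g_m=g_{a,m}+g_{e,m}$ so that the diffusion approximation part is $(g_{a,m}-g)(s,\Pi(X^{(m)}_s))+g_{e,m}(s,\Pi(X^{(m)}_s))$. For the approximation parts I apply Theorem \ref{thm:convergence_barron_functions} componentwise over $j=1,\dots,d$: on the event that $\Pi(X^{(m)}_s)$ stays in the bounded domain $D$ of Assumption \ref{assum:sdde_coefficients} this gives $E\norm{(f_m-f)(s,\cdot)}_2^2\le \tfrac{3C_\sigma}{m}\norm{f}_\mathcal{B}^2\le \tfrac{3C_\sigma}{m}C_f$ and $E\norm{(g_{a,m}-g)(s,\cdot)}_2^2\le\tfrac{3C_\sigma}{m}C_g$, while on the complementary event---of probability at most $\epsilon_F$, and $0$ on a bounded domain---I use the crude uniform bound $(f_m-f)^2,(g_m-g)^2\le C_B$ from Assumptions \ref{assum:sdde_coefficients}--\ref{assum:Delay-SDE-net_coefficients}, contributing a term $C_B\epsilon_F$; the epistemic piece is bounded by its training-induced ceiling $E\norm{g_{e,m}(s,\cdot)}_2^2\le C_{g_e}$ (Sections \ref{subsubsect:train_g_e}--\ref{subsubsect:prediction_N_steps_forward}). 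For the propagated-error parts I invoke the (mean-square) Lipschitz property of $f$ and $g$ with constant $C_L$ over the $p$ sampled points of $\Pi$, which bounds $E\norm{f(s,\Pi(X^{(m)}_s))-f(s,\Pi(X_s))}_2^2$ and its $g$-analogue by $pC_L\,u(s)$.

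Collecting these estimates puts the display in the form $u(t)\le A+B\int_0^t u(s)\,ds$ with $A=4T\big(C_{g_e}+2C_B\epsilon_F+\tfrac{3C_\sigma}{m}(2C_g+TC_f)\big)$ and $B=2pC_L(4+2T)$ after absorbing the numerical prefactors from Step 1 and the factor-$2$ losses from each triangle split; Gr\"onwall's inequality then yields $u(t)\le A\,e^{Bt}$, and since $\norm{X^{(m)}(t)-X(t)}_{L_2}^2\le u(t)$ the bound $\norm{X^{(m)}(t)-X(t)}_{L_2}\le C_m$ follows with $C_m=(A e^{Bt})^{1/2}$, which is the stated expression. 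The $N$-steps-ahead case is identical except that $C_{g_e}$ and the aleatoric variance are those produced by the scheme of Section \ref{subsubsect:prediction_N_steps_forward}.

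\textbf{Main obstacle.} The delicate point is the use of Theorem \ref{thm:convergence_barron_functions} along the trajectory: the $L_2$-norm there is with respect to the input distribution, which here is the law of $\Pi(X^{(m)}_s)$ and hence depends on $f_m$ (and $g_{a,m}$) itself, so the argument must be organized so that the Barron approximant is chosen compatibly with, or uniformly over, the relevant family of measures supported on $D$, with $\epsilon_F$ and $C_B$ absorbing the part of the trajectory outside $D$. The remaining work---tracking the maximal-inequality constants, ensuring the supremum is taken early enough that the Lipschitz term closes the Gr\"onwall loop, and the componentwise bookkeeping over $j=1,\dots,d$---is routine but must be carried out carefully to land on the exact constant in $C_m$.
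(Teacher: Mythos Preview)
Your proposal is correct and follows essentially the same route as the paper: subtract the two integral equations, split each coefficient difference into an approximation piece $(h_m-h)(\cdot,\Pi(X^{(m)}_s))$ and a propagated piece $h(\cdot,\Pi(X^{(m)}_s))-h(\cdot,\Pi(X_s))$, bound the former via Theorem~\ref{thm:convergence_barron_functions} on a bounded domain together with the crude $C_B\epsilon_F$ contribution on the rare-event complement, bound the latter by $pC_L$ times the squared error via Lipschitz continuity over the $p$ delay points, use the training ceiling $C_{g_e}$ for the epistemic piece, and close with Gr\"onwall.

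The one technical difference is that you pass to the running supremum $u(t)=E\big[\sup_{r\le t}\norm{X^{(m)}(r)-X(r)}_2^2\big]$ and invoke Doob's $L^2$-maximal inequality on the stochastic integral, whereas the paper works with the pointwise quantity $\norm{X^{(m)}(t)-X(t)}_{L_2}^2$ throughout and handles the delayed terms $\norm{X^{(m)}(t+u_i)-X(t+u_i)}_{L_2}^2$ by asserting that the error is monotone in $t$. Your route is arguably the more robust of the two, but the Doob constant~$4$ turns the paper's prefactor~$2$ on the diffusion integral into~$8$ (before the $g_m=g_{a,m}+g_{e,m}$ split), so after the triangle splits you land on $A'=4T(4C_{g_e}+8\xi(g)+T\xi(f))$ and $B'=4pC_L(8+T)$ rather than the stated $A=4T(C_{g_e}+2\xi(g)+T\xi(f))$ and $B=2pC_L(4+2T)$; the claim that the bookkeeping yields the \emph{exact} $C_m$ is therefore too optimistic, though the bound you obtain is of the same form.
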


\begin{proof}
First, is it easy to see that
\begin{align}
\label{eq:initial_result}
\sup_{-\tau\leq t \leq 0}\norm{X^{(m)}(t) - X(t)}_{L_2} & = \sup_{-\tau\leq t \leq 0}\norm{\eta(t) - \eta(t)}_{2} = 0.
\end{align}
Using the identity $\abs{u+v}^q\leq 2^{q-1}(\abs{u}^q + \abs{v}^q)$, It{\^o} isometry and Cauchy-Schwarz inequality, we find that
\begin{align}
    \begin{split}
    \label{eq:continuous_nn_convergence_1st_result}
    \norm{X^{(m)}(t) - X(t)}_{L_2}^2 &= E\left[\left(\int_0^t \left(g_m - g\right)dW(s)+\int_0^t \left(f_m - f\right)ds \right)^2\right] \\
    &\leq  2E\left[\left(\int_0^t\left(g_m - g\right)dW(s)\right)^2+\left(\int_0^t \left(f_m - f\right)ds \right)^2\right] \\
    &\leq  2E\left[\int_0^t\left(g_m - g\right)^2ds\right]+2E\left[\int_0^t \left(f_m - f\right)^2ds\int_0^t ds\right] \\
    &\leq 4\int_0^t \norm{g_{e,m}}^2_{L_2}ds + 4\int_0^t \norm{g_{a,m} - g}^2_{L_2}ds + 2a\int_0^t \norm{f_m - f}_{L_2}^2ds
    \end{split}
\end{align}
where in the last step, we used the fact that $g_m=g_{a,m}+g_{e,m}$, the identity $\abs{u+v}^q\leq 2^{q-1}(\abs{u}^q + \abs{v}^q)$, and monotone convergence. Note that we use $f_{m}\coloneqq f_{m}(s,\Pi(X_s^{(m)}))$ and $f\coloneqq f(s,\Pi(X_s))$ for simplicity, and similar for $g_m$, $g$, $g_{a,m}$ and $g_{e,m}$.

Now, let $h_m$ and $h$ represent $g_{a,m}$, $f_m$ and $g$, $f$, respectively. Then we have
\begin{align}
    \begin{split}
    \label{eq:sdde_coefficient_norms}
    &\norm{h_m(t,\Pi(X_t^{(m)})) - h(t,\Pi(X_t))}_{L_2} \leq \\ &\norm{h_m(t,\Pi(X_t^{(m)})) - h(t,\Pi(X_t^{(m)}))}_{L_2} + \norm{h(t,\Pi(X_t^{(m)})) - h(t,\Pi(X_t))}_{L_2},
    \end{split}
\end{align}
by Minkowski inequality.

In the following, each of the two terms on the right hand side of Eq.\,\eqref{eq:sdde_coefficient_norms} will be considered separately.

\begin{enumerate}
    \item Note that $X_t^{(m)}=[X_{t}^{1,(m)},\ldots ,X_{t}^{d,(m)}]^T \in\R^d$, where $X_{t}^{j,(m)}\in C$ for $j=1,\ldots ,d$. That is, $X_t^{(m)}\in C(\Omega\times J;\R^d)$ for every $t\in[0,T]$. Further, $\Pi(X_t^{(m)}) = [X^{(m)}(t+u_1),\ldots ,X^{(m)}(t+u_p)]^T :\Omega\to\R^{d p}$ for fixed $u_1,\ldots ,u_p\in[-\tau,0]$, $\tau>0$, where $X^{(m)}(t+u_i):\Omega\to \R^d$ for $i=1,\ldots ,p$. Now, define the multivariate cumulative distribution function of $\Pi(X_t^{(m)})$ as $F:\R^{d p}\to [0,1]$. Further, split the measurable set of values of $\Pi(X_t^{(m)})$, that is $\R^{d p}$, into a bounded domain, $D_r\coloneqq \{x\in\R^{d p}:\abs{x}<r\}$ for some $r>0$, and its complement $D^c_r$, such that probabilities over $D_r^c$ are small. Then we obtain the following for the first term on the right hand side of Eq.\,\eqref{eq:sdde_coefficient_norms}.

\begin{align*}
    \norm{h_m(t,\Pi(X_t^{(m)})) - h(t,\Pi(X_t^{(m)}))}_{L_2}^2 &= E\left[(h_m(t,x)-h(t,x))^2\right] \\
     & = \int_{\R^{dp}}(h_m(t,x)-h(t,x))^2dF(x) \\
     & = \int_{D_r}(h_m(t,x)-h(t,x))^2dF(x) +\\ &\quad\quad\int_{D^c_r}(h_m(t,x)-h(t,x))^2dF(x).
\end{align*}
By Assumption\,\ref{assum:sdde_coefficients} and \ref{assum:Delay-SDE-net_coefficients} we know that $h_m$ and $h$ are bounded functions. Using this, and the fact that $D_r^c$ is a domain of rare events, we have that
\begin{align*}
    \int_{D^c_r}(h_m(t,x)-h(t,x))^2dF(x) \leq C_BF(D_r^c)\leq C_B\epsilon_F,
\end{align*}
for all $t\in [0,T]$, where $C_B>0$ is a constant and $\epsilon_F>0$ represents the small probability over $D^c_r$. Further, by Assumption\,\ref{assum:sdde_coefficients} we have $h\in\mathcal{B}$, and therefore, by Theorem\,\ref{thm:convergence_barron_functions} we find 
\begin{align*}
    \int_{D_r}(h_m(t,x)-h(t,x))^2dF(x) \leq \frac{3C_\sigma \norm{h}_{\mathcal{B}}^2}{m}.
\end{align*}
    \item By Lipschitz continuity, and by definition of the Euclidean norm, the projection $\Pi$ (Eq.\,\eqref{eq:the_projection}) and the segment process $X_t$ (Eq.\,\eqref{eq:the_segment_process}), the second term on the right hand side of Eq.\,\eqref{eq:sdde_coefficient_norms} is given by 
\begin{align*}
    \norm{h(t,\Pi(X_t^{(m)})) - h(t,\Pi(X_t))}_{L_2}^2 & = E\left[\left(h(t,\Pi(X_t^{(m)})) - h(t,\Pi(X_t))\right)^2\right] \\
    &\leq C_L E\left[\norm{\Pi(X_t^{(m)}) - \Pi(X_t)}_2^2\right] \\
    &= C_L E\left[\sum_{i=1}^{p}\left(X^{(m)}_t(u_i) - X_t(u_i)\right)^2\right] \\
    &= C_L \sum_{i=1}^{p}\norm{X^{(m)}_t(u_i) - X_t(u_i)}_{L_2}^2.
\end{align*}
Since the norm is increasing as $t+u_i$ grows (see Eq.\,\eqref{eq:the_segment_process}) and $u_i\in[-\tau,0]$, and by use of Eq.\,\eqref{eq:initial_result}, we further find that
\begin{align*}
    &\norm{h(t,\Pi(X_t^{(m)})) - h(t,\Pi(X_t))}_{L_2}^2 \\
    &\leq C_L\sum_{i=1}^{p}\left(\sup_{0\leq s \leq -u_i}\norm{X^{(m)}_s(u_i) - X_s(u_i)}_{L_2}^2 + \sup_{-u_i\leq s \leq t-u_i}\norm{X^{(m)}_s(u_i) - X_s(u_i)}_{L_2}^2\right) \\
    &\leq C_L\sum_{i=1}^{p}\norm{X^{(m)}(t) - X(t)}_{L_2}^2 = pC_L\norm{X^{(m)}(t) - X(t)}_{L_2}^2.
\end{align*}
\end{enumerate}

Applying the identity $\abs{u+v}^q\leq 2^{q-1}(\abs{u}^q + \abs{v}^q)$ to Eq.\,\eqref{eq:sdde_coefficient_norms} with the results from points 1. and 2. above inserted, we find that
\begin{align}
    \label{eq:hm_minus_h_result}
    \norm{h_m(t,\Pi(X_t^{(m)})) - h(t,\Pi(X_t))}_{L_2}^2 \leq 2\left(\xi(h) + pK\norm{X^{(m)}(t) - X(t)}_{L_2}^2\right),
\end{align}
where we have defined $\xi(h)\coloneqq \frac{3C_\sigma \norm{h}^2_{\mathcal{B}}}{m} + C_B\epsilon_F$.

From Sections\,\ref{subsubsect:train_g_e}-\ref{subsubsect:prediction_N_steps_forward}, we know how to find an upper bound for the epistemic error $\norm{g_{e,m}}_{L_2}^2\leq C_{g_e}$. Insert this, as well as the result in Eq.\,\eqref{eq:hm_minus_h_result}, into Eq.\,\eqref{eq:continuous_nn_convergence_1st_result} to find
\begin{align*}
    \norm{X^{(m)}(t) - X(t)}_{L_2}^2  \leq& 4\int_0^TC_{g_e}ds + 8\int_0^T\xi(g)ds + 4T\int_0^T\xi(f)ds \\
    &+8pC_L\int_0^t\norm{X^{(m)}(s) - X(s)}_{L_2}^2ds  +4TpC_L\int_0^t\norm{X^{(m)}(s) - X(s)}_{L_2}^2ds \\
    =& 4T(C_{g_e} + 2\xi(g) + T\xi(f)) + 2pC_L(4+2T)\int_0^t\norm{X^{(m)}(s) - X(s)}_{L_2}^2ds.
\end{align*}

Finally, define the constants $C_1\coloneqq 4T(C_{g_e} + 2\xi(g) + T\xi(f))$ and $C_2\coloneqq 2pC_L(4+2T)$, as well as the function $G(t)\coloneqq \norm{X^{(m)}(t) - X(t)}_{L_2}^2$, and find by Gr{\"o}nwall's inequality that 
\begin{align*}
    G(t) \leq C_1 + C_2\int_0^t G(s) ds \quad\leftrightarrow\quad  G(t) \leq C_1\exp\left(C_2t\right)
\end{align*}
This concludes the proof.
\end{proof}

When estimating a Delay-SDE-net to imitate a real-world SDDE representing some dynamical system, the estimation procedure takes place in the discrete-time domain. That is, the upper bound of the total error is larger than the upper bound stated in Theorem\,\ref{thm:continuous_Delay-SDE-net_error}. The following corollary gives a theoretical upper bound of the total error of the estimated Delay-SDE-net.

\begin{cor}
\label{cor:barron_convergence}
Given an SDDE as in Eq.\,\eqref{eq:sdde_partitioned} and an associated (continuous-time) two-layer Delay-SDE-net, under Assumption\,\ref{assum:sdde_coefficients} and \ref{assum:Delay-SDE-net_coefficients}, the error bound of the estimated $\pi$-discretized version of the two-layer Delay-SDE-net ($\pi$ defined from Eq.\,\eqref{discretization_partition}) is given by
\begin{align*}
    \norm{X^{\pi,(m)}(t) - X(t)}_{L_2} \leq C^{1/2}_T\Delta t^{\gamma} + C_m.
\end{align*}
The constant $C_m$ is given in Theorem\,\ref{thm:continuous_Delay-SDE-net_error}. Further, $C_T>0$ is a constant dependent on $T$ and independent of $\Delta t$, and $\gamma$ is a regularity condition on the initial path of the real-world SDDE (see Assumption\,\ref{assum:Delay-SDE-net_coefficients}).
\end{cor}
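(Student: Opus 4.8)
The plan is to combine the two approximation steps described in the text via the triangle inequality in the $L_2$-norm. Concretely, I would write
\begin{align*}
\norm{X^{\pi,(m)}(t) - X(t)}_{L_2} \leq \norm{X^{\pi,(m)}(t) - X^{(m)}(t)}_{L_2} + \norm{X^{(m)}(t) - X(t)}_{L_2},
\end{align*}
which is just Minkowski's inequality applied to the intermediate object $X^{(m)}(t)$, the continuous-time Delay-SDE-net. This splits the total error into a discretization error (continuous Delay-SDE-net versus its $\pi$-discretized version) and a modelling error (real-world SDDE versus continuous Delay-SDE-net).

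For the second term, I would simply invoke Theorem\,\ref{thm:continuous_Delay-SDE-net_error}, which under Assumptions\,\ref{assum:sdde_coefficients} and \ref{assum:Delay-SDE-net_coefficients} gives $\norm{X^{(m)}(t) - X(t)}_{L_2} \leq C_m$ with $C_m$ exactly as stated in that theorem. For the first term, I would use the Milstein/Euler-Maruyama convergence result quoted from \cite{mohammed04} (Thm.\,5.2 and the Remark in Sect.\,5.3) in Eq.\,\eqref{eq:convergence_rate_sdde}, which under Assumption\,\ref{assum:Delay-SDE-net_coefficients} and the hypothesis $\sup_{-\tau\leq t\leq 0}\norm{\eta^\pi(t)-\eta(t)}_2 \leq C_\eta \Delta t^\gamma$ yields
\begin{align*}
\sup_{-\tau\leq t\leq T}\norm{X^{\pi,(m)}(t) - X^{(m)}(t)}_{L_2}^2 \leq C_T \Delta t^{2\gamma},
\end{align*}
so that $\norm{X^{\pi,(m)}(t) - X^{(m)}(t)}_{L_2} \leq C_T^{1/2}\Delta t^{\gamma}$ for every $t\in[-\tau,T]$. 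Substituting both bounds into the triangle inequality gives the claimed $C_T^{1/2}\Delta t^{\gamma} + C_m$.

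The proof is therefore essentially a one-line assembly once the two ingredients are in hand, so there is no serious obstacle in the corollary itself; the real work was already done in Theorem\,\ref{thm:continuous_Delay-SDE-net_error}. The only points requiring a little care are: (i) making sure the discretization bound from \cite{mohammed04} applies to the Delay-SDE-net with its state-independent diffusion — this is where the earlier observation that $F^j=0$, i.e. the Milstein scheme collapses to Euler-Maruyama, is used, together with the regularity in Assumption\,\ref{assum:Delay-SDE-net_coefficients} that matches the hypotheses of that theorem; and (ii) noting that the bound holds uniformly in $t$, so that passing from the $\sup$-over-$t$ statement to a pointwise statement at a fixed $t$ is immediate. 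I would also remark, as the theorem statement already does, that because $C_m = O(m^{-1/2})$ and the discretization term is $O(\Delta t^\gamma)$, the total error vanishes as $m\to\infty$ and $\Delta t\to 0$ provided the epistemic term $C_{g_e}$ and the rare-event term $C_B\epsilon_F$ are controlled (the latter being zero on a bounded domain), which is the qualitative takeaway of the corollary.
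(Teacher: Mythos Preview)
Your proposal is correct and matches the paper's own proof essentially line for line: the paper also applies Minkowski's inequality to insert $X^{(m)}(t)$, then invokes Eq.\,\eqref{eq:convergence_rate_sdde} (the Milstein bound from \cite{mohammed04}) for the discretization term and Theorem\,\ref{thm:continuous_Delay-SDE-net_error} for the modelling term. Your additional remarks on checking the hypotheses and on the asymptotics of $C_m$ are sound but go slightly beyond what the paper writes out.
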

\begin{proof}
By Minkowski inequality we find
\begin{align}
    \begin{split}
    \label{eq:cor_study_convergence_help}
    \norm{X^{\pi,(m)}(t) - X(t)}_{L_2} & = \norm{X^{\pi,(m)}(t) - X^{(m)}(t) + X^{(m)}(t) - X(t)}_{L_2}\\
    &\leq \norm{X^{\pi,(m)}(t) - X^{(m)}(t)}_{L_2} + \norm{X^{(m)}(t) - X(t)}_{L_2}.
    \end{split}
\end{align}
Inserting results from Eq.\,\eqref{eq:convergence_rate_sdde} and Theorem\,\ref{thm:continuous_Delay-SDE-net_error} completes the proof.
\end{proof}


\section{Analysis and application}
\label{sect:Analyses_and_application}


This section presents numerical and empirical studies of the two-layer Delay-SDE-net. This includes a simulation study in Section \ref{subsubsect:numerical_convergence_study} to explore numerical convergence of the Delay-SDE-net, as well as a comparison study in Section \ref{subsubsection:sim_compare} where the Delay-SDE-net is compared with two benchmark models on simulated data. Finally, we evaluate the performance of the Delay-SDE-net on a real world case study in Section \ref{subsect:case_study} involving weather data. 

\subsection{Simulation study}
\label{subsect:simulation_study}

A simulation study is performed to analyze the characteristics and performance of the two-layer Delay-SDE-net. That is, an SDDE representing the real-world model, see Eq.\,\eqref{eq:data_generating_sdde}, is defined to simulate data sets for training and evaluating the Delay-SDE-net. The numerical convergence rate of the Delay-SDE-net model is studied by testing a trained reference solution of the Delay-SDE-net model on data with different time resolutions. Further, the Delay-SDE-net is compared to two other time series models by assessing the prediction error of drift and diffusion coefficients.

\subsubsection{Simulation setup}
\label{subsubsect:simulation_study_setup}

Consider a real-world SDDE as in Eq.\,\eqref{eq:data_generating_sdde}, where $d=2$ and $p=p_1=p_2=4$ (meaning $\Pi = \Pi_1=\Pi_2$). That is, we assume that a real-world dynamical system evolves in time like $X(t)\in\R^{2}$, and is dependent on $\Pi(X_t(u))\in\R^{8}$, where $t\in[0,T]$, $u\in[-\tau,0)$, $T,\tau\geq 0$. Note that the point dependencies from the projection, $\Pi$, depends on the defined discretization scheme. For example, when $\Delta t = 1$ we have $(u_1,u_2,u_3,u_4) = (-4,-3,-2,-1)$ (see further details in Section\,\ref{subsubsect:numerical_convergence_study}). We define the SDDE coefficients with parameters $(a_{(i)}^j,w_{(i)}^j,b^j)\in \R\times \R^{9} \times \R$ as follows. Let the drift $f = [f^1,f^2]^T$ be given as
\begin{align*}
    f^j(t,x)\coloneqq f^j(t,\Pi(X_t))&=\sum_{i=1}^{2} a_i^j \sigma_1((w_i^j)^T [t,x]^T),\quad j=1,2,
\end{align*}
where $a^1_1=a^2_1 = a^1_2=a^2_2 = 5$, the weights $w_i^j\in \R^9$, $i=1,2$, $j=1,2$, are given in Eq.\,\eqref{eq:example_weights}, and 
\begin{align*}
    \sigma_1(z) = \frac{e^{\alpha z}-e^{-\alpha z}}{e^{\alpha z}+e^{-\alpha z}},
\end{align*}
is the tanh activation function. Further, the diffusion $g=[g^1,g^2]^T$ is given by
\begin{align*}
    g^j(t,x)\coloneqq g^j(t,\Pi(\eta(0))) = a^j\sigma_2((w^j)^T [t,x]^T +b^j),\quad j=1,2,
\end{align*}
where $a^1=4$, $a^2 = 1/8$, the weights $w^j\in \R^9$, $j=1,2$, are given in Eq.\,\eqref{eq:example_weights}, $b^1=0$, $b^2 = 1$, and
\begin{align*}
    \sigma_2(z) = \frac{1}{1 + e^{-\lambda z}},
\end{align*}
is the sigmoid activation function. Note that $f^1,f^2,g^1,g^2\in\mathcal{B}$ by definition (see Section\,\ref{subsect:nn_barron_space_theory}).

Sample paths on $t\in[0,T]$ are simulated using the discretization scheme 
\begin{align}
    \label{eq:simulation_X}
    X^{j,\pi}(t_{k+1}) = & X^{j,\pi}(t_k) + f^{j}(t_k,\Pi(X_{t_k}))\Delta t +g^{j}(t_k,\Pi(\eta(0)))\sqrt{\Delta t} Z^j(t_k),
\end{align}
for each $j=1,2$, where $Z^j(t_k)\sim N(0,1)$, and the weights of $f$ and $g$ are given by
\begin{align}
\label{eq:example_weights}
\begin{bmatrix}
(w_1^1)^T\\
(w_2^1)^T\\
(w^2_1)^T\\
(w_2^2)^T\\
(w^1)^T\\
(w^2)^T
\end{bmatrix}=
\begin{bmatrix}
0 & 3 & 2 & 2 & 5 & -3 & 1 & -3 & -1\\
0 & 1 & 0 & -0.5&0&-1&0&-0.5&0\\
0 & 0&2&0&-3&0&1&0&0\\
0 & 0& 1&0&-0.5&0&0&0&-0.5 \\
-5/365 & 0&0&0&0&0&0&0&0\\
0&0& 0&0&0&0&0&0&1
\end{bmatrix}10^{-2}.
\end{align}
Each of the simulated paths are initiated with $\eta(t)=[\sin (z_1t),\cos (z_2t)]$, $z_1,z_2\sim N(0,1)$, on $t\in [-\tau,0]$. An example of a simulated sample path with $\Delta t = 1$, $T=365$ and $\tau = 4$ is shown in Figure\,\ref{fig:ood_sim} (both as ID and OOD data).

\subsubsection{Numerical convergence study}
\label{subsubsect:numerical_convergence_study}

In the current section we study the convergence rate of a two-layer Delay-SDE-net numerically. The presented methodology is inspired by the work in \cite{lord_powell_shardlow_2014}, Sect.\,8.5.

From Eq.\,\eqref{eq:cor_study_convergence_help} and Corollary \,\ref{cor:barron_convergence} we know that the $L_2$-error of the discrete time two-layer Delay-SDE-net, $X^{\pi,(m)}$, for some fixed time $t\in[0,T]$ is bounded as 
\begin{align}
\begin{split}
\label{eq:convergence_rate_L2_norm_dt_Cm}
    \norm{X^{\pi,(m)}(t) - X(t)}_{L_2} &\leq \norm{X^{\pi,(m)}(t) - X^{(m)}(t)}_{L_2} + \norm{X^{(m)}(t) - X(t)}_{L_2} \\
    &\leq C^{1/2}_T\Delta t^{\gamma} + C_m,
\end{split}
\end{align}
where $X$ represents the solution of the real-world SDDE in Eq.\,\eqref{eq:data_generating_sdde}, and $X^{(m)}$ is the continuous time Delay-SDE-net in Eq.\,\eqref{eq:continuous_nn_sdde}. The numerical estimate of the $L_2$-error of $X^{\pi,(m)}$ is denoted by 
\begin{align*}
    \Delta_{X^{\pi,(m)}}\simeq \norm{X^{\pi,(m)}(t) - X(t)}_{L_2}.
\end{align*}
Remember that $C_m$ represents the upper bound of the $L_2$-error of the continuous time Delay-SDE-net, $X^{(m)}$. That is, $C_m$ is the bound
\begin{align*}
    \norm{X^{(m)}(t) - X(t)}_{L_2}\leq C_m ,
\end{align*}
that is explicitly given in Theorem\,\ref{thm:continuous_Delay-SDE-net_error}. The numerical estimate of the $L_2$-error of $X^{(m)}$ is denoted as
\begin{align*}
    \Delta_{X^{(m)}}\simeq \norm{X^{(m)}(t) - X(t)}_{L_2},
\end{align*}
in the following.

By Eq.\,\eqref{eq:convergence_rate_L2_norm_dt_Cm}, the convergence rate of $X^{\pi,(m)}$ can be studied using the relation
\begin{align}
    \label{eq:study_convergence_theoritical}
    \ln\left(\norm{X^{\pi,(m)}(t) - X(t)}_{L_2} - C_m\right)\leq \frac{\ln C_T}{2} + \gamma \ln\Delta t.
\end{align}
As $C_m$ is an upper bound, and since we know the explicit form of the real-world SDDE, $X(t)$, in this simulation study, we will use the estimated $L_2$-error $\Delta_{X^{(m)}}$ in place of $C_m$ to assess the convergence rate, $\gamma$. Further, the term $\ln C_T/2$ in Eq.\,\eqref{eq:study_convergence_theoritical} does not provide useful information when studying the convergence rate, as it influences the y-axis intercept only. That is, to study the convergence rate of $X^{\pi,(m)}$, we will consider the relation
\begin{align}
\label{eq:numerical_convergence_rate_study}
    \ln\left(\Delta_{X^{\pi,(m)}} - \Delta_{X^{(m)}}\right)\leq \gamma \ln\Delta t.
\end{align}

A step-by-step methodology for how to perform the numerical convergence study for the two-layer Delay-SDE-net follows: \\
\begin{enumerate}
    \item Simulate data from known real-world SDDE, $X(t)$, with small $\Delta t_{\text{ref}}$.
    \item Fit a two-layer Delay-SDE-net to simulated data with resolution $\Delta t_{\text{ref}}$. This is considered an accurate reference solution of $X^{(m)}(t)$.
    \item Compute the estimated $L_2$-error $\Delta_{X^{(m)}}$.
    \item Compute the estimated $L_2$-error $\Delta_{X^{\pi,(m)}}$ using resolution $\Delta t = \kappa\Delta t_{\text{ref}}$, $\mathbb{N}\ni\kappa > 1$, for several $\kappa$, on simulated data from point 1.
    \item Perform a common numerical convergence study to assess the convergence rate, $\gamma$, given in Corollary\,\ref{cor:barron_convergence}.
\end{enumerate}

The methodology above is used to study the convergence rate of a two-layer Delay-SDE-net fit to data simulated by the real-world SDDE defined in Section\,\ref{subsubsect:simulation_study_setup}. That is, we use Eq.\,\eqref{eq:simulation_X} to simulate $M = 1000$ samples of the of the real-world SDDE with 
\begin{align*}
    \Delta t_{\text{ref}} \coloneqq t_{k+1}-t_k = 0.01, \quad t_k\in(0,5], \quad k=1,\ldots ,N_{\text{ref}},
\end{align*}
where $N_{\text{ref}} = 500$, and $t_{-l}\in[-15,0]$, $l=0,\ldots ,1500$, for the initial path $\eta$. Note that this means that $\tau=15$ and $T=5$. The $M$ simulated samples are divided into training/test subsets ($M_{\text{train}}=700/M_{\text{test}}=300$ split). The reference solution of the continuous time Delay-SDE-net is fit using all the points in the training data set, meaning it is fit with resolution $\Delta t_{\text{ref}}$. This gives reference solution $X^{0.01,(m)}$ for the continuous-time Delay-SDE-net $X^{(m)}$. Note that, when predicting $X$ using $X^{0.01,(m)}$, it is important to use the same Brownian path as used to generate data from the real-world process $X$. The computed value of $\Delta_{X^{(m)}}$ is reported in Table\,\ref{tab:numerical_convergence}. See Appendix\,\ref{app:convergence_study_ref_solution} for a more detailed description on the numerics.

In step 4., we make five new predicted paths of the simulated test data set, but now by using the trained Delay-SDE-net (the reference solution) on test subsets with reduced resolutions 
\begin{align*}
\Delta t = \kappa\Delta t_{\text{ref}}, \quad\kappa=5,10,50,100,500.    
\end{align*}
These respective resolutions gives sample paths of sizes $N_\kappa=N_{\text{ref}}/\kappa = 100,50,10,5,1$. Note that the prediction from each resolution represents a discretized Delay-SDE-net model $X^{\pi,(m)}$. That is, $X^{0.05,(m)}$, $X^{0.1,(m)}$, $X^{0.5,(m)}$, $X^{1,(m)}$ and $X^{5,(m)}$ respectively. The corresponding $L_2$-error estimate, $\Delta_{X^{\pi,(m)}}$, is computed for each of the five discretized Delay-SDE-nets, and the results are reported in Table\,\ref{tab:numerical_convergence}. See Appendix\,\ref{app:convergence_study_discretized_models} for more details about the numerics.

\begin{table}[hbt!]
    \centering
    \begin{tabular}{c|c|c|c|c|c|c}
       $\kappa$  & $1$ & $5$ & $10$ & $50$ & $100$ & $500$\\ \toprule
       $\Delta_{X^{(m)}}$ & $1.956$ & $-$ & $-$ & $-$ & $-$ & $-$ \\ 
       $\Delta_{X^{\pi,(m)}}$ & $-$ & $6.030$ & $9.764$ & $26.082$ & $37.658$ & $78.328$ \\ \bottomrule
    \end{tabular}
    \caption{Error estimate of the continuous-time Delay-SDE-net, $\Delta_{X^{(m)}}$, and error estimates of the discrete-time Delay-SDE-net, $\Delta_{X^{\pi,(m)}}$, for five different resolutions.}
    \label{tab:numerical_convergence}
\end{table}

Settings for training of the neural networks in this numerical study can be found in Appendix \ref{a:num_param}.

As expected, we see from Table\,\ref{tab:numerical_convergence} that better resolution of the discretization scheme provides more accurate predictions. It is also apparent that $X^{0.01,(m)}$ (that is fit as an accurate reference solution for the continuous-time Delay-SDE-net) performs considerably better than the corresponding discretized Delay-SDE-nets. Figures\,\ref{fig:num_sim_1}-\ref{fig:num_sim_4} illustrate these facts for the respective models $X^{0.01,(m)}\simeq X^{(m)}$, $X^{0.05,(m)}$, $X^{0.1,(m)}$ and $X^{0.5,(m)}$, where the prediction for one given sample path is graphed for each of these models. That is, the paths predicted using the Delay-SDE-net models tend to deviate with greater distance from the true path for coarser resolution.

\begin{figure}[hbt!]
\centering
\begin{subfigure}{.49\textwidth}
    \centering
    \includegraphics[width=.95\linewidth]{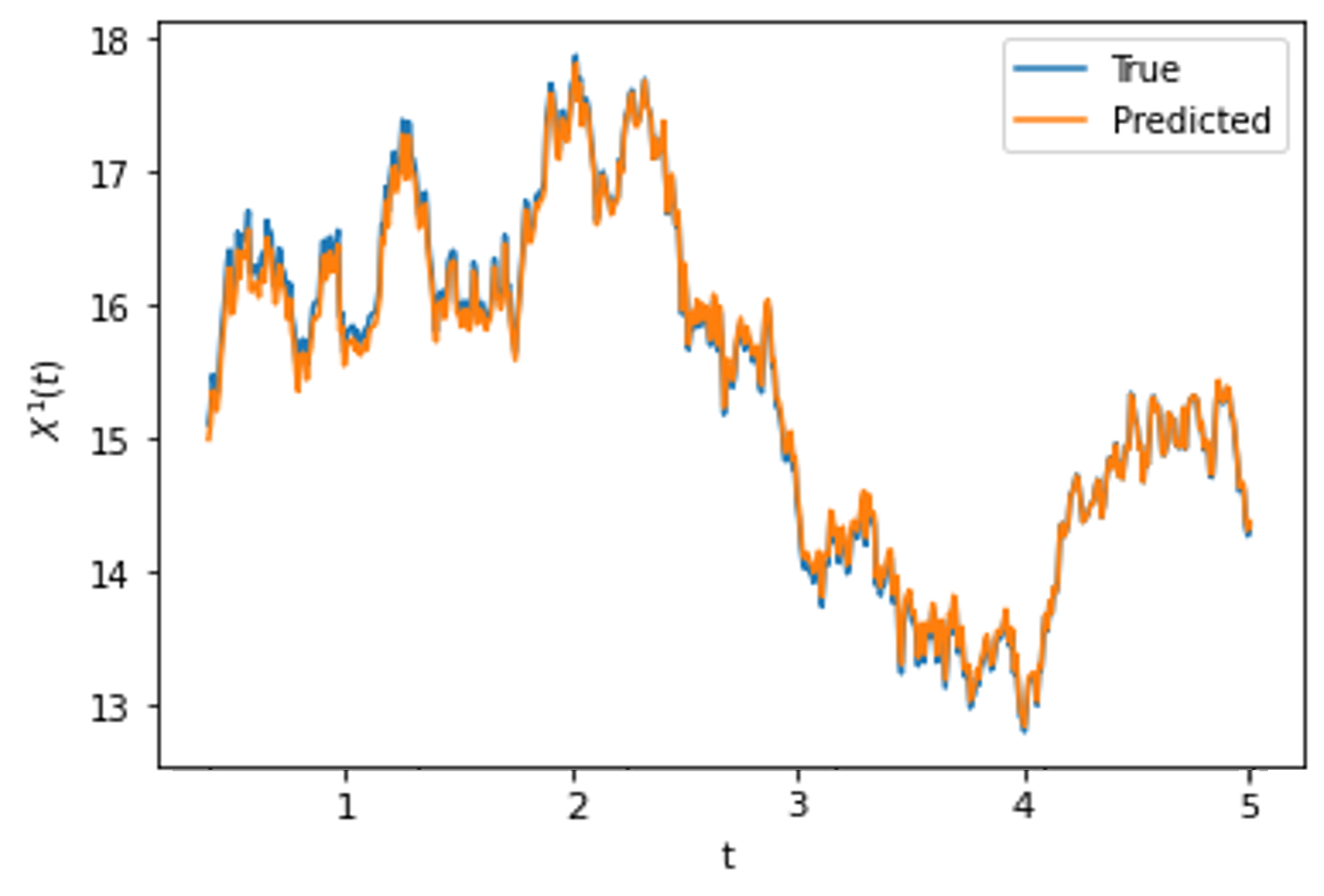}  
    \caption{$\kappa=1$}
    \label{fig:num_sim_1}
\end{subfigure}
\begin{subfigure}{.49\textwidth}
    \centering
    \includegraphics[width=.95\linewidth]{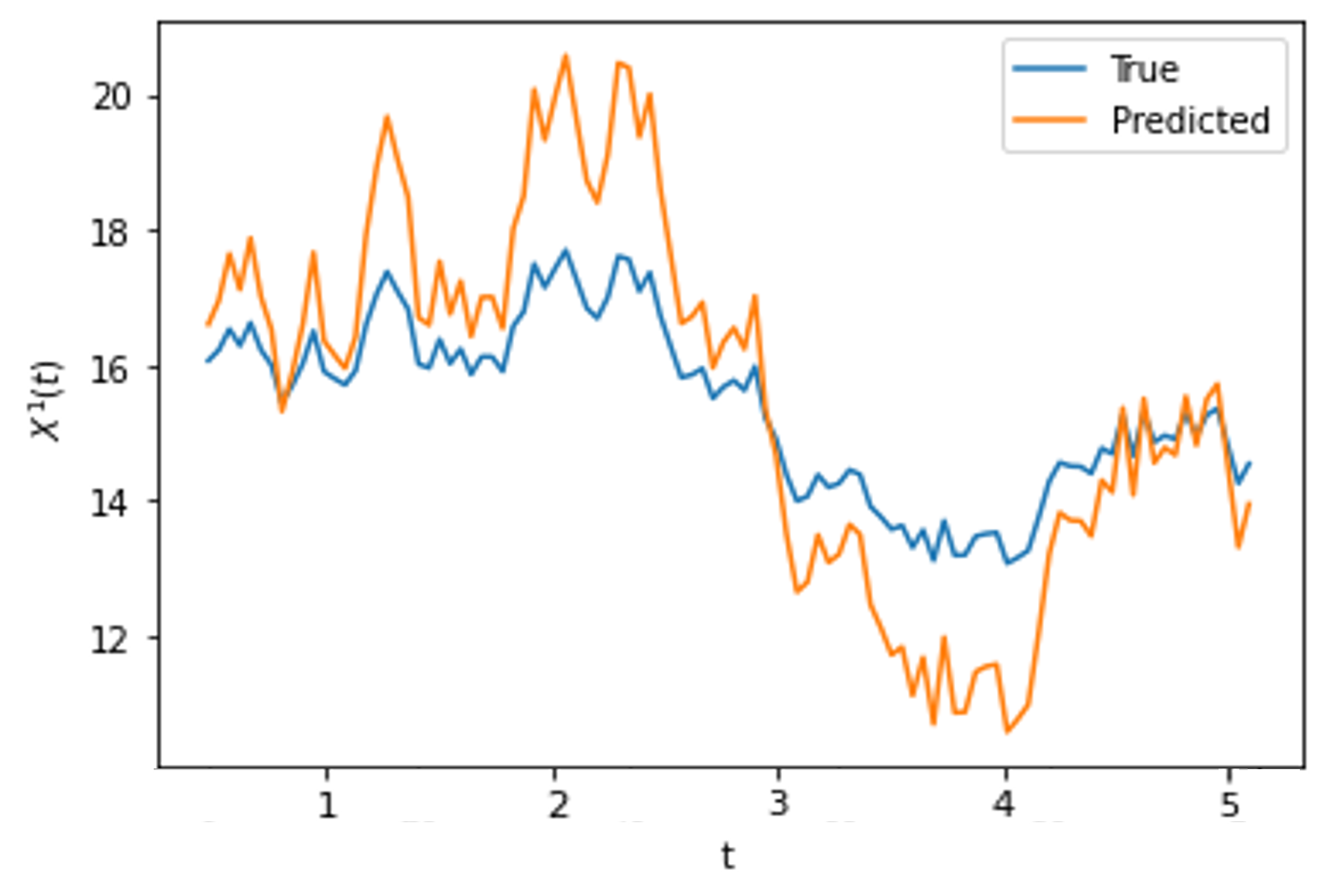}  
    \caption{$\kappa=5$}
    \label{fig:num_sim_2}
\end{subfigure}
\begin{subfigure}{.49\textwidth}
    \centering
    \includegraphics[width=.95\linewidth]{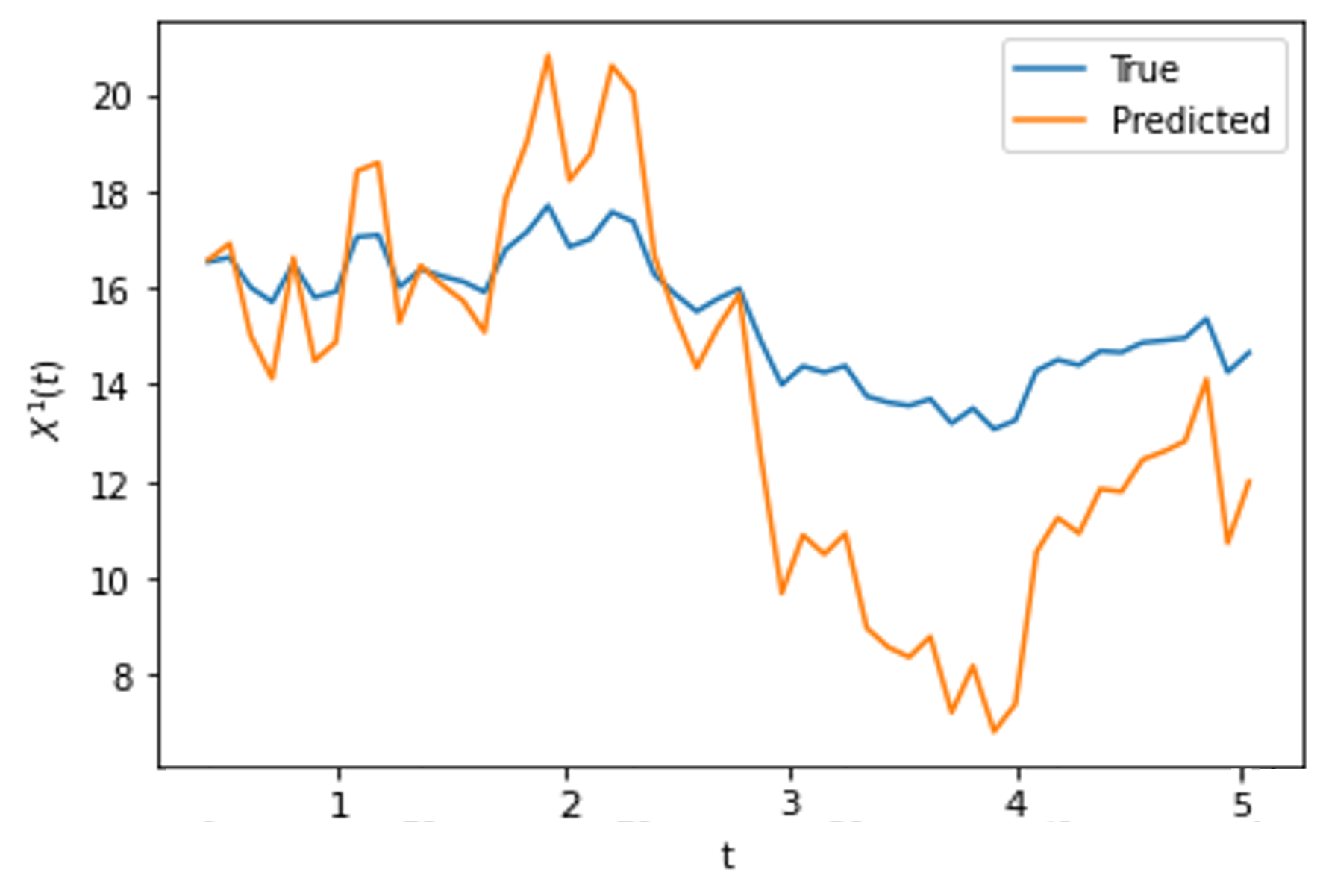}  
    \caption{$\kappa=10$}
    \label{fig:num_sim_3}
\end{subfigure}
\begin{subfigure}{.49\textwidth}
    \centering
    \includegraphics[width=.95\linewidth]{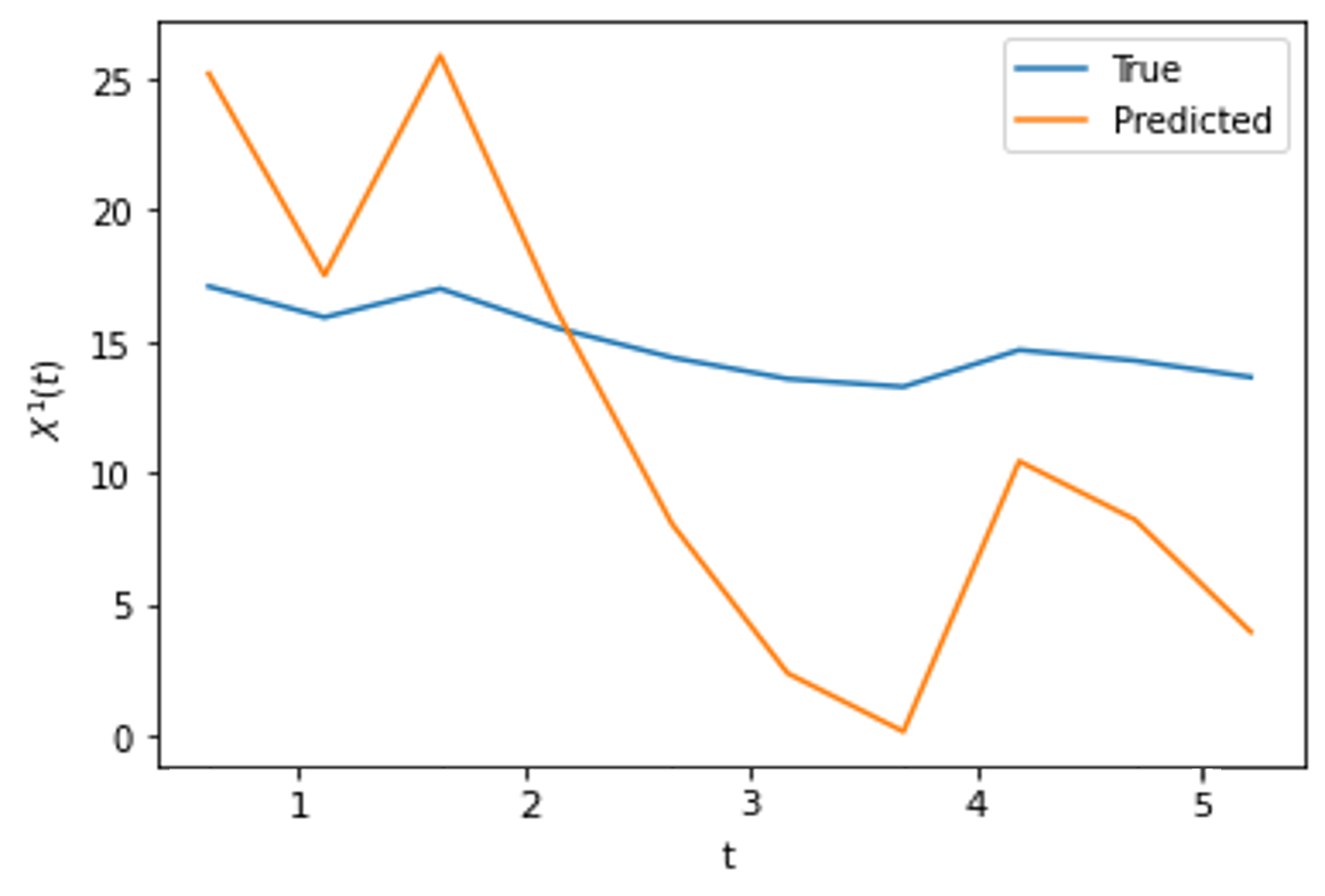}  
    \caption{$\kappa=50$}
    \label{fig:num_sim_4}
\end{subfigure}
\caption{Predictions of a simulated sample path of $X^1(t)$ (Section\,\ref{subsubsect:simulation_study_setup}) made by the Delay-SDE-net models $X^{(m)}(t)$ ($\kappa=1$), $X^{0.05,(m)}(t)$ ($\kappa=5$), $X^{0.1,(m)}(t)$ ($\kappa=10$) and $X^{0.5,(m)}(t)$ ($\kappa=50$).}
\label{fig:num_sim}
\end{figure} 

\begin{figure}[hbt!]
    \centering
    \includegraphics[width=7cm]{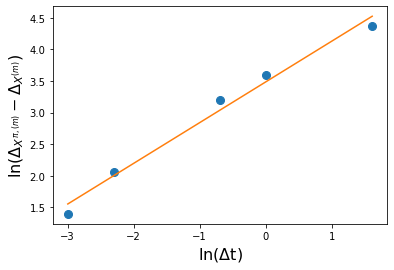}
    \caption{Log-log plot of the approximated $L_2$-errors of the discretized Delay-SDE-net against $\Delta t$. The Delay-SDE-nets represented are $X^{\pi,(m)}(t)$ with discretization step sizes $0.05$, $0.1$, $0.5$, $1$ and $5$. A plot of the corresponding linear regression representing the rate of convergence of $X^{(m)}(t)$ is shown as well.}
    \label{fig:numerical_error}
\end{figure}

The convergence rate of the Delay-SDE-net, $\gamma$, can be assessed by using the values in Table\,\ref{tab:numerical_convergence} as input to Eq.\,\eqref{eq:numerical_convergence_rate_study}. The results are given in Figure\,\ref{fig:numerical_error}, with a corresponding linear regression. The rate of convergence, corresponding to the slope of the linear regression in Figure\,\ref{fig:numerical_error}, is given by $\gamma = 0.6359$ in this study. The estimated convergence rate is better than the theoretical Euler-Maruyama scheme convergence rate, that is $\gamma_{\text{Euler}}=1/2$. Note that the Euler-Maruyama scheme is derived for It\^{o} diffusions with state-dependent drift and diffusion terms. In this work we assume state-independence for the diffusion term of the Delay-SDE-net (see Section\,\ref{subsect:theoretical_setup}), meaning that the corresponding SDDE model is less complex in that respect. However, the SDDE model in this work is more complex than the standard It\^{o} diffusions, as the latter lacks memory. These positive and negative trade-offs between the standard It\^{o} diffusions and our SDDE modelling framework seem to give our model a slight advantage for the rate of convergence.

Note that we compare the Delay-SDE-net convergence rate with the Euler-Maruyama scheme instead of the Milstein scheme, even though the convergence result in Corollary\,\ref{cor:barron_convergence} is derived using the Milstein scheme. The reason for doing this is that the Milstein scheme reduces to the Euler-Maruyama scheme for models with state-independent diffusion.

\subsubsection{Comparison with other models}
\label{subsubsection:sim_compare}

In this section the performance of the Delay-SDE-net is assessed by comparing it to two similar models for dynamical systems. These benchmark models, namely the VAR model and the SDE-net, have similar properties as the Delay-SDE-net. The VAR model is able to capture relations between lagged versions of the dynamical system, but only linearly, unlike the Delay-SDE-net. Both the SDE-net and Delay-SDE-net are based on stochastic differential equations with nonlinear coefficients, however, the SDE-net contain only information about the current state of the system, while the Delay-SDE-net additionally captures nonlinearity in lagged states of the dynamical system. The models are tested on data simulated by the SDDE introduced in Section\,\ref{subsubsect:simulation_study_setup}.

The following points specify how training and test data sets are generated using the simulation setup in Section\,\ref{subsubsect:simulation_study_setup}, and further how the comparison study is performed:\\

\begin{enumerate}
    \item Simulate 110 paths of data with $\Delta t \coloneqq t_{k+1}-t_k = 1, \quad t_k\in[1,365], \quad k=1,\ldots ,365$. The simulated data correspond to daily values over 110 years (where the leap years are removed).
    \item Divide the data into training/validation/test subsets (90/10/10 split). 
    \item Train the prediction models $\hat{f}$ and $\hat{g}_a$ on the training data $\mathcal{D}_{0}$, tune hyper parameters on the validation data and evaluate performance on the test data $\mathcal{D}$. All these data sets are generated by $X^\pi=[X^{1,\pi},X^{2,\pi}]^T$, see Eq.\,\eqref{eq:simulation_X}.
    \item Draw OOD data from a data set $\tilde{\mathcal{D}}$ generated by $\tilde{X}^\pi=[\tilde{X}^{1,\pi},\tilde{X}^{2,\pi}]^T$, being the SDDE in Eq.\,\eqref{eq:simulation_X}, but with increased noise $\tilde{g}^1=2.5g^1$, and $\tilde{g}^2=2.5g^2$. That is, create OOD data with more extreme values than the ID data, that are still realistic. A plot showing one year of simulated ID data and OOD data is given in Figure \ref{fig:ood_sim}.
    \item Replace randomly given intervals of values in test set $\mathcal{D}$ with intervals of values from $\tilde{\mathcal{D}}$. Details about the intervals we replaced are given in Appendix \ref{a:ood_data}. The intervals were drawn randomly to satisfy:
    \begin{align*}
     \mathrm{max}(X^{j,\pi})<\tilde{X}^{j,\pi}<\mathrm{min}(X^{j,\pi}),\quad j=1,2, 
    \end{align*}
    with $X^{j,\pi}\in\mathcal{D}_0$, to make sure that the data points are OOD from the training set. The modified data set $\mathcal{D}^*$  consists of 3.9\% OOD data, which mimics a real world situation where the extreme values are rare. 
    \item Evaluate the performance of $\hat{g}_e^{\text{prob}}$ on identifying OOD data in $\mathcal{D}^*$.
\end{enumerate}

\begin{figure}[hbt!]
\centering
\begin{subfigure}{.49\textwidth}
    \centering
    \includegraphics[width=.95\linewidth]{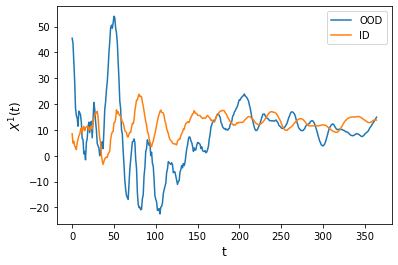}  
    \caption{}
    \label{fig:ood_sim_1}
\end{subfigure}
\begin{subfigure}{.49\textwidth}
    \centering
    \includegraphics[width=.95\linewidth]{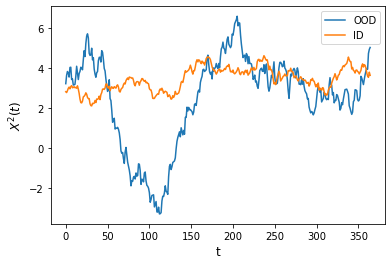}  
    \caption{}
    \label{fig:ood_sim_2}
\end{subfigure}
\caption{One year simulated ID and OOD data for (a) $X^1(t)$ and (b) $X^2(t)$.}
\label{fig:ood_sim}
\end{figure}

The SDE-net is implemented according to the regression example in \cite{kong20}, and the VAR model prediction is based on the empirical work in \cite{eggen2022}. The final tuning parameters for the SDE-net and Delay-SDE-net, as well as the parameters for the VAR model, are given in Appendix \ref{a:comparison_param} and \ref{a:var_param}, respectively.

The performance on simulated data are given in Table\,\ref{tab:sim_results} for the three models. The drift term is evaluated in terms of the root mean squared error (RMSE) between the true $X^1$ and predicted value $\hat{X}^1$ (using Eq.\,\eqref{eq:training_values_of_drift_net}), assuming $X^1$ is the variable of interest and $X^2$ is an explanatory variable. The aleatoric diffusion term is evaluated as the RMSE between the true and predicted aleatoric variance, $g_a^2$ and $\hat{g}_a^2$ respectively. Finally, the epistemic diffusion term is evaluated as the area under the receiver operating curve (ROCAUC) for $\hat{g}_e^{\text{prob}}$ identifying if data is taken from the OOD distribution $\tilde{X}$ (binary classification).

The Delay-SDE-net consistently outperforms the comparing models when predicting $f$, $g_a$ and $g_e^{\text{prob}}$.  It is clear that the Delay-SDE-net has some advantages over the SDE-net and VAR model on this kind of time series data. The VAR model performs better than the SDE-net in predicting $f$ and $g_a$. As a well known fact, and as reflected in the results in Table\,\ref{tab:sim_results}, linear models often perform well in modelling of nonlinear systems, especially for simpler systems with low dimensionality, which corresponds to the simulated data in this study.  If we were to simulate data with more complex structures and interaction effects with higher dimensionality, we expect neural network models to achieve even higher performance compared to the VAR model, than achieved in this simulation study.

The Delay-SDE-net also provides better estimates of $g_e^{\text{prob}}$ than the SDE-net, giving almost a perfect prediction close to 1.  Note that $g_e^{\text{prob}}$ is independent of the number of prediction steps, as it only depends on the input value at the first step. Finally, unlike the Delay-SDE-net and SDE-net, the VAR model does not provide predictions of $g_e^{\text{prob}}$.

\begin{table}[hbt!]
    \centering
    {\tabcolsep5pt
    \begin{tabular}{c|c|c|c|c|c|c|c|c}
          &  \multicolumn{3}{c|}{Value ($x|f$)} &  \multicolumn{3}{c|}{Aleatoric Uncertainity ($g^2_a$)} &  \multicolumn{2}{c}{Epistemic Uncertainity ($g_e)$}\\ 
         &  \multicolumn{3}{c|}{(RMSE)} &  \multicolumn{3}{c|}{(RMSE)} &  \multicolumn{2}{c}{(ROCAUC)}\\ \toprule
         &  VAR & SDE-net & Delay & VAR & SDE-net & Delay & SDE-net & Delay\\ 
         $t+1$  &  0.70 & 1.31  & \textbf{0.66} & 0.58 & 3.03 & \textbf{0.26} &   0.901 & \textbf{0.992} \\
         $t+2$  & 1.09 & 1.82 & \textbf{1.02} & 1.45 & 3.37 & \textbf{0.39} & & \\
         $t+3$ & 1.52 & 2.32 & \textbf{1.44} & 2.39 & 3.88 & \textbf{1.00} &   &  \\
         $t+4$  & 1.92 & 2.72 & \textbf{1.84} & 3.35 & 4.42 & \textbf{2.06} & & \\
         $t+5$  & 2.25 & 3.06 & \textbf{2.17} & 4.33 & 5.04 & \textbf{2.62} & & \\
         $t+6$  & 2.51 & 3.28 & \textbf{2.42} & 5.31 & 5.74 & \textbf{3.13} & & \\
          $t+7$  & 2.72 & 3.40 & \textbf{2.63} & 6.30 & 6.45 & \textbf{3.63} & & \\
    \end{tabular}}
    \caption{Performance for predicting $f$, $g_a^2$ and $g_e$ for the three comparing models, where \textit{Delay} is the Delay-SDE-net.}
    \label{tab:sim_results}
\end{table}

\subsection{A real-world case study}
\label{subsect:case_study}

In this section, the Delay-SDE-net is evaluated on a real-world case study, and the performance is compared to the SDE-net and the VAR model (as in Section\,\ref{subsubsection:sim_compare}). This case study is about short-term prediction of a real-world two-dimensional weather system. That is, based on a two dimensional time series of wind and temperature, we predict wind up to one week using the Delay-SDE-net. More information about the time series, and arguments of why we use this specific dynamical system to assess the Delay-SDE-net is provided in the following section.

\subsubsection{Data}
\label{subsubsect:data}

In this section we give a summary of the data used for our case-study, and explain why this two-dimensional data set is interesting to study.

It is well known that weather is a nonlinear dynamical system of chaos. As a consequence, even if weather variables have a certain degree of memory that can be used in weather prediction, reliable predictions are limited to about 10 days \citep{krishnamurthy2019predictability}. Recent research focus on enhancing long-term prediction of surface weather. A proper representation of stratospheric weather (the stratosphere is an area of the atmosphere about $15$\;km to $50$\;km above the surface) has potential to enhance long-term surface prediction, see for example \cite{blanc18}. In particular, the extreme stratospheric event called sudden stratospheric warming (SSW) is important in this respect \citep[][]{karpechko16,scaife22,hitchcock14}. The phenomenon of SSWs is a consequence of an abrupt disruption in the stratospheric Norther Hemisphere winter circulation. That is, in winter time the stratospheric North Pole is surrounded by high-speed western to easterly wind (U wind) that slows down and ultimately reverse about $6$ times per decade \citep{Pedatella2018}. See \cite{eggen2021} and references therein for a more thorough introduction.

In the following section, the Delay-SDE-net is assessed as a short-term prediction model for stratospheric (\textit{zonal}) U wind. This is considered as a highly relevant case because it challenges the Delay-SDE-net with a seasonal trend, nonlinearities, memory in time, a seasonal uncertainty (variance) pattern, as well as a having probability of extreme events such as SSWs. Based on standard definitions of SSWs \citep{butler2015defining}, the Delay-SDE-net is fit using daily zonal mean stratospheric temperature and U wind data at $10$\;hPa altitude and $60^{\circ}$N. Data specifications are given in Table\,\ref{tab:data set_specs}, and final daily zonal mean time series are prepared as presented in \cite{eggen2021}. Note that we have chosen to study the two-dimensional dynamical system of U wind and temperature based on identified cross correlations between the two meteorological variables in \cite{eggen2022}.

\begin{table}[hbt!]
    \centering
    {\tabcolsep3pt
    \begin{tabular}{c|c|c|c|c|c}
       Date & Grid & Pressure level & Time & Area & Unit  \\ \toprule
       \makecell{1 Jan. 1979 to\\31 Dec. 2018} &  $0.5^{\circ}$ & $10$\;hPa & \makecell{00:00, 06:00,\\12:00, 18:00} & \makecell{$60^{\circ}$N and\\$[-180^{\circ}\text{E},180^{\circ}\text{E})$}  & \makecell{Temperature: Kelvin,\\U wind: m/s}\\ \bottomrule
    \end{tabular}}
    \caption{Specifications of stratospheric temperature and U wind data. The specifications for the two values are similar, except units. Initial data sets are retrieved as ECMWF ERA-Interim reanalysis model products. This table is borrowed from \cite{eggen2022}.\label{tab:data set_specs}}
    \label{tab:data_sepcs}
\end{table}

\begin{figure}[hbt!]
    \centering
    \includegraphics[width=7cm]{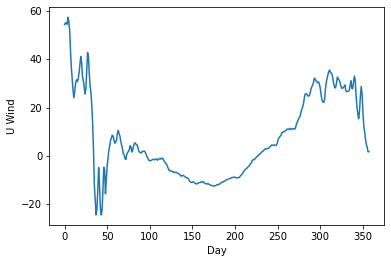}
    \caption{Daily zonal mean stratospheric U wind in 2018. There is a major SSW in February, which is visible by the sudden change in direction of U wind around day 45-51. The beginning of a new SSW can be observed in the end of the year.}
    \label{fig:stratos_warming}
\end{figure}

As indicated above, U wind has a seasonal pattern both in value and variance. In winter time U wind is usually having a positive high magnitude value, that changes to a negative low magnitude value in summer time. Further, the winter time stratosphere is more disturbed in winter time than in summer time, creating a higher aleatoric uncertainty in the stratospheric weather in winter time. An example year (2018) of daily zonal mean stratospheric U wind is shown in Figure\,\ref{fig:stratos_warming}, where the above points are clearly visible. Additionally, notice the major SSW in 2018, where the winter time stratosphere suddenly changes direction.

The results in the following section are based on a data set that is divided into a training set and a test set (30/10 split), where the model is trained on the years 1979-2008, and evaluated on the years 2009-2018.

\subsubsection{Results}
\label{subsubsect:results}

A successful prediction model should be able to provide high performance predictions of the U wind. Further, it should be able to identify SSWs, even though this might come from OOD data, and to predict the seasonal variance which is higher in winter time than in summer time. The Delay-SDE-net uses the deterministic trained drift net $f_m$ to predict the expected U wind, the aleatoric diffusion net $g_{a,m}$ to predict the seasonal variance, and the epistemic diffusion net $g_{e,m}$ to predict uncertainty coming from unusual events such as SSWs. When presenting the results, we refer to the observations $\{\bs{x}_{t_{k}}=[x_{1,{t_{k}}},x_{2,{t_{k}}}]:\quad t_{k}\in [1,365]\}$ as $x_1$ and $x_2$, where U wind, $x_1$, is the variable of interest, while temperature, $x_2$, is an explanatory variable.

The model is compared to the VAR and SDE-net models by predicting 1-7 days forward, and the results are given in Table \ref{tab:strat_results}. The drift is evaluated by the RMSE between the true $x_1$ and predicted U wind $\hat{x}_1$ (using Eq.\,\eqref{eq:training_values_of_drift_net}), while the diffusion is evaluated in terms of RMSE between the predicted variance $\hat{g}^2$ and the true residuals $e^2=(x_1-\hat{x}_1)^2$, to evaluate how well the models can predict their own uncertainty on the data. For the Delay-SDE-net, $\hat{g}^2=(g_{a,m}+g_{e,m})^2$. Note that we do not know the true $g_a$ and $g_e$ for this real-world data set, meaning we have to evaluate the models' ability to estimate its own total uncertainty, $\hat{g}^2=(\hat{g}_a+\hat{g}_e)^2$, where $\hat{g}_e=\sigma_e^{(N)}\hat{g}_e^{\text{prob}}$ for both the Delay-SDE-net and SDE-net. $\sigma_e^{(N)}$ was estimated according to Eq.  \ref{eq:min_sigma} for the results in Table \ref{tab:strat_results}, however in the plots of Figure\,\ref{fig:3_day}, \ref{fig:2018} and \ref{fig:2018_ci}, $\sigma_e^{(N)}$ was set to a higher value for illustration.

\begin{table}[hbt!]
    \centering
    \begin{tabular}{c|c|c|c|c|c|c}
         &  \multicolumn{3}{c|}{Value ($x_1|f$)} &  \multicolumn{3}{c}{Uncertainity ($g^2$)}\\ 
         &  \multicolumn{3}{c|}{(RMSE)} &  \multicolumn{3}{c}{(RMSE)}\\ \toprule
         &  VAR & SDE-net & Delay-SDE-net & VAR & SDE-net & Delay-SDE-net  \\ 
         $t+1$  & 2.31 & 2.25  & \textbf{1.48} & 15 & 14 & \textbf{6} \\
         $t+2$  & 3.37 & 3.70 & \textbf{2.92} & 34 & 36 & \textbf{24} \\
         $t+3$  & 4.40 & 4.64 & \textbf{4.10} & 58 & 63 & \textbf{44}  \\
         $t+4$ & 5.20 & 5.59 & \textbf{5.02} & 81 & 90 & \textbf{67}   \\
         $t+5$  & 5.84 & 6.77 & \textbf{5.70} & 104 & 130 & \textbf{90}  \\
         $t+6$  & 6.38 & 7.36 & \textbf{6.28} & 128 & 149 & \textbf{114} \\
         $t+7$  & 6.82 & 7.93 & \textbf{6.73} & 148 & 179 & \textbf{130}  \\  \bottomrule
    \end{tabular}
    \caption{1-7 day ahead prediction. \textit{Value} gives the RMSE between $x_1$ and $\hat{x_1}$, while \textit{Uncertainty} gives the RMSE between $\hat{g}^2=(\hat{g}_a+\hat{g}_e)^2$ and $e^2=(x_1-\hat{x}_1)^2$.}
    \label{tab:strat_results}
\end{table}

The Delay-SDE-net consistently provides the best prediction of both $f$ and $g$. The SDE-net has in general the lowest performance both in terms of $f$ and $g$, also lower than the VAR model, showing that the inclusion of time lags is more important than using nonlinear functions for this data set. The Delay-SDE-net works well by capturing nonlinear and complex functions by the use of neural networks, while  taking advantage of both the present and past states by behaving like an SDDE. However, we observe that the difference in performance for the Delay-SDE-net over the VAR model is higher in N-day ahead prediction for smaller $N$, while it decreases as we predict longer into the future. This is in correspondence with the knowledge that weather has limited length of inherent memory, as well as a large amount of natural chaos \citep{krishnamurthy2019predictability}, meaning all predictions will converge towards random guesses as $N$ increases.

To illustrate the Delay-SDE-net as a prediction model, Figure\,\ref{fig:3_day_a} shows the 3-day ahead prediction from the Delay-SDE-net on the test data, while Figure\,\ref{fig:3_day_b} shows the squared residuals together with the predicted uncertainties. It seems like the predicted U wind fits well with the true U wind, while $g_{a,m}^2$ captures uncertainty originating in seasonality and $g_{e,m}^2$ some of the residual peaks coming from unusual events such as SSW. However, as it is difficult to properly see the predictions on a time series over 10 years, we zoom in year 2012 as an example in Figure\,\ref{fig:2018}. This is a quite turbulent year, as it contains two minor SSWs (in the beginning and in the end of the year), as well as the beginning of a major SSW to the far end, who takes place in the beginning of 2013. We observe that the predicted U wind fits well, however making some mistakes around the high-variance winter seasons and especially around the SSWs. $g_{a,m}$ provides good predictions of the aleatoric uncertainty, both catching its seasonality as well as other effects given in the U wind and temperature values, and follows the pattern in the residuals in Figure\,\ref{fig:2012_b}. However, $g_{a,m}$ does not provide high enough predictions of the variances around the SSW before day 50 and at the end of the year, as it might not have seen many similar situations in the training data. We observe that $g_{e,m}$ predicts a high epistemic uncertainty at these situations, warning that something unusual might happen. Together, the two uncertainty predictions provide a good pattern, capturing much of the residuals in Figure\,\ref{fig:2012_c}.

\begin{figure}[hbt!]
\centering
\begin{subfigure}{.49\textwidth}
    \centering
    \includegraphics[width=.99\linewidth]{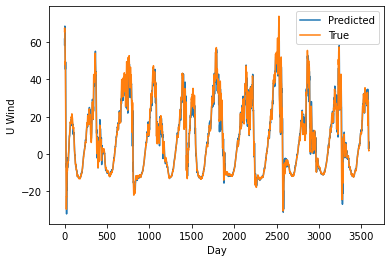}  
    \caption{}
    \label{fig:3_day_a}
\end{subfigure}
\begin{subfigure}{.49\textwidth}
    \centering
    \includegraphics[width=.99\linewidth]{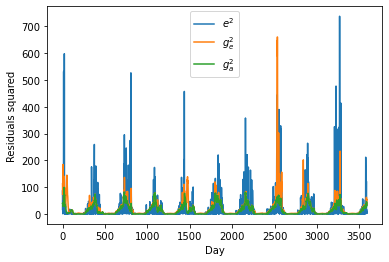}  
    \caption{}
    \label{fig:3_day_b}
\end{subfigure}
\caption{3-day ahead (a) true and predicted U Wind for 2009-2018 with the (b) predicted variances plotted together with the squared residuals. The green lines show $g_{a,m}^2$, while the orange line is $g_{e,m}^2$ added on top of $g_{a,m}^2$.}
\label{fig:3_day}
\end{figure}

By combining the prediction of expected U wind with predicted uncertainty, we can create prediction plots with confidence intervals. The 95\% confidence intervals are shown in Figure\,\ref{fig:2018_ci}, first for $g_{a,m}$ and $g_{e,m}$ separately, as well as the summed uncertainty in Figure\,\ref{fig:2018_ci_c}. As expected, $g_{a,m}$ provides a wide confidence in the volatile winter times, while going towards zero in summer time, where the U wind is much more stable. The confidence interval corresponding to $g_{e,m}$ is mainly narrow (as being trained as a probability, it is never actually zero), however it is wider around the unfamiliar SSWs. The combined plot in Figure\,\ref{fig:2018_ci_c} provides reasonable confidence intervals for the predicted U wind, and shows how the predictions from the neural networks $f_m$, $g_{a,m}$ and $g_{e,m}$ operate very well together.

\begin{figure}[hbt!]
\centering
\begin{subfigure}{.32\textwidth}
    \centering
    \includegraphics[width=.95\linewidth]{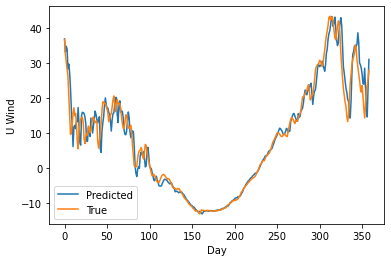}  
    \caption{}
    \label{fig:2012_a}
\end{subfigure}
\begin{subfigure}{.32\textwidth}
    \centering
    \includegraphics[width=.95\linewidth]{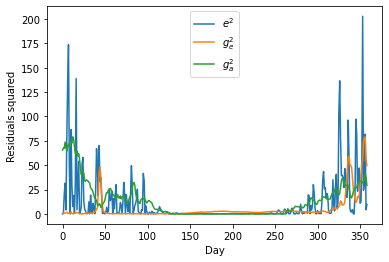}  
    \caption{}
    \label{fig:2012_b}
\end{subfigure}
\begin{subfigure}{.32\textwidth}
    \centering
    \includegraphics[width=.95\linewidth]{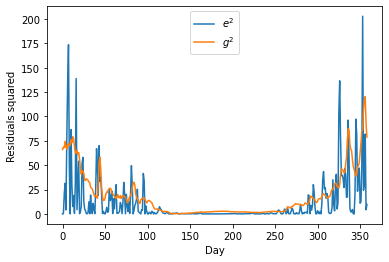}  
    \caption{}
    \label{fig:2012_c}
\end{subfigure}
\caption{3-day ahead (a) true and predicted U Wind for 2012 with the (b) squared residuals and predicted aleatoric and epistemic variances and (c) residuals and the total predicted variance.}
\label{fig:2018}
\end{figure}

\begin{figure}[hbt!]
\centering
\begin{subfigure}{.32\textwidth}
    \centering
    \includegraphics[width=.95\linewidth]{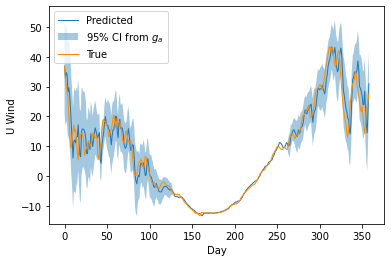}  
    \caption{}
    \label{fig:2018_ci_a}
\end{subfigure}
\begin{subfigure}{.32\textwidth}
    \centering
    \includegraphics[width=.95\linewidth]{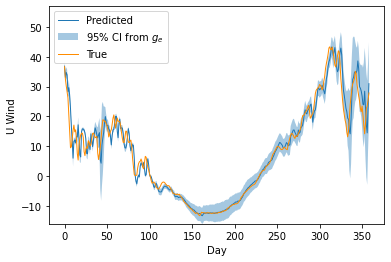}  
    \caption{}
    \label{fig:2018_ci_b}
\end{subfigure}
\begin{subfigure}{.32\textwidth}
    \centering
    \includegraphics[width=.95\linewidth]{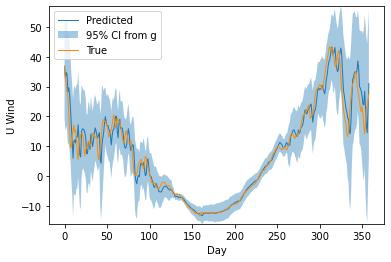}  
    \caption{}
    \label{fig:2018_ci_c}
\end{subfigure}
\caption{3-day ahead predicted U Wind for 2012 with 95\% confidence intervals calculated from (a) $g_{a_m}$, (b)  $g_{e,m}$ and (c) $g_m$.}
\label{fig:2018_ci}
\end{figure}


\section{Conclusions and further work}
\label{sect:conclusions}


This paper presents the Delay-SDE-net, a novel neural network model for predicting time series with memory, which also predicts aleatoric and epistemic uncertainty separately. The model is constructed to behave as a stochastic delay differential equation (SDDE), meaning it is based on a state-dependent drift term and a stochastic term providing predictions based on lagged data in the time series. The drift term is modelled by one neural network, and predicts deterministically the values of the time series. The stochastic term consists of two neural networks, where one models the aleatoric uncertainty in the time series and the other models the epistemic uncertainty.

A theoretical upper error bound is derived for the trained Delay-SDE-net compared to a real-world SDDE. The error bound consists of two parts, that is the upper error bound between the continuous-time Delay-SDE-net and a corresponding real-world SDDE, and the upper error bound between the continuous-time Delay-SDE-net and it discretized version. The former is inversely dependent on the number of neurons in the neural networks representing the model coefficients, and the latter is dependent on the resolution of the time discretization as usual. That is, we have proved that the model error can be adjusted to the better by increasing the number of neurons in training of the networks, and by using a finer time discretization scheme. The convergence rate of the discrete-time Delay-SDE-net towards the continuous-time Delay-SDE-net is studied numerically, and we find a rate of $\gamma=0.64$, which is better than the standard Euler-Maruyama convergence rate for It\^{o} diffusions.

The performance of the Delay-SDE-net is evaluated on data simulated from a real-world SDDE, as well as in a real-world case study involving a two-dimensional system of weather variables. That is, we compare predictions made by the Delay-SDE-net to corresponding predictions from the original SDE-net and a VAR model. The Delay-SDE-net has consistently the best performance, both in predicting the future values of the times series, as well as in estimating aleatoric and epistemic uncertainty.

As more fields and application areas are taking advantage of advanced statistical methods such as machine learning, it becomes more important to have models suitable for the requirements in each specific case. Safety-critical applications, for example, increase the need for providing prediction uncertainty, especially epistemic uncertainty, in order to make safe decisions also under unexpected operating conditions. Other areas, such as for example finance and weather modelling, often involve data with high inherent aleatoric uncertainty, which is important to estimate. Additionally, several application areas need close to immediate response of model predictions and estimated uncertainties to make decisions sufficiently fast. Examples include movement control of autonomous vehicles and stock trading. Therefore, the uncertainty estimates of the Delay-SDE-net are created by the single deterministic approach, meaning it can provide both prediction and uncertainty estimates immediately, without being delayed by repeated simulations.

The combined properties of the Delay-SDE-net, such as its ability to model time series with memory and give real-time prediction of uncertainty, currently makes it a unique model suitable for many fields that are searching for more advanced data-driven modelling techniques for time series. Additionally, restricting the system of networks to behave as an SDDE can potentially improve modelling properties such as interpretability, robustness, and stability on OOD data, as well as decrease the amount of required training data, similarly to other physics-informed neural networks \citep{karniadakis2021physics,kashinath2021physics,yang2019adversarial}.

Due to the novelty of the Delay-SDE-net, there are several aspects of training, modelling and numerical assessment that can be explored further. The authors are especially interested in further exploring the training methodology of the epistemic diffusion net. For example, one could use a regression procedure instead of classification, to avoid the tuning parameter $\sigma_e$. A more throughout investigation of the properties of the soft Brownian offset method in higher dimensions is also of interest, to conclude specific guidelines on appropriate tuning parameters for this approach on time series data. The convergence rate of the discreet-time Delay-SDE-net is assessed numerically in this work. An additional error assessment that should be performed is to study the error bound of the continuous-time Delay-SDE-net numerically, to verify that the error bound is inversely dependent on the number of neurons in the networks. Finally, verification of the model on other data sets, and within other fields, is also desirable. Regarding this last point, a work in progress uses the Delay-SDE-net on a complex data set of processed infrasound measurements to predict stratospheric wind. Preliminary predictions have reached cutting-edge results within this specific field. These results prove the Delay-SDE-net's ability to model complex dynamical systems.

\section*{Acknowledgements} 

A special thanks to Fred Espen Benth that has provided valuable advice and suggestions during this work. The authors are also grateful for the support from Riccardo De Bin, Quentin Brissaud, Arne Bang Huseby and Sven Peter N\"{a}sholm. M.D.E. was funded by a PhD grant from NORSAR. The authors have no conflicts of interest to declare.

\appendix
\section{Numerical convergence study}
\subsection{The continuous-time Delay-SDE-net reference solution}
\label{app:convergence_study_ref_solution}
The continuous time Delay-SDE-net is tested using $M_{\text{test}}$ samples. That is, let $s=1,\ldots ,M_{\text{test}}$, such that the data set of each sample path is given as 
\begin{align*}
    \mathcal{D}_s \coloneqq \{\bs{\eta}^{(s)}_{t_{-l}}\in\R^2:\quad t_{-l}\in[-15,0]\}\cup\{\bs{x}_{t_k}^{(s)}\in\R^2:\quad t_k\in(0,5]\},
\end{align*}
representing the real-world SDDE, $X(t)$. For each $\mathcal{D}_s$, predict $500$ steps forward to time $t_{N_{\text{ref}}}=5$, from time $t_0=0$, using the trained continuous time model $X^{0.01,(m)}(t)\simeq X^{(m)}(t)$ (where $X^{0.01,(m)}(t)=X^{\pi,(m)}(t)$ for $\Delta t_{\text{ref}}=0.01$). That is, for each $s$, use the trained two-layer Delay-SDE-net coefficients $f_m$ and $g_m=(g_{a,m}+g_{e,m})$ to compute
    \begin{align*}
        X^{(m)}(0.01)\simeq \hat{\bs{x}}_{t_1}^{(s)} &= \bs{\eta}_{t_0}^{(s)}+f_{m}(t_1,\bs{\eta}_{t_{0}}^{(s)},\bs{\eta}_{t_{-1}}^{(s)},\bs{\eta}_{t_{-2}}^{(s)}, \bs{\eta}_{t_{-3}}^{(s)})\Delta t_{\text{ref}} + g_{m}\Delta W_{\text{ref}}^{(1)}, \\
        X^{(m)}(0.02)\simeq \hat{\bs{x}}_{t_2}^{(s)} &= \hat{\bs{x}}_{t_1}^{(s)}+f_{m}(t_2,\hat{\bs{x}}_{t_1}^{(s)},\bs{\eta}_{t_{0}}^{(s)},\bs{\eta}_{t_{-1}}^{(s)},\bs{\eta}_{t_{-2}}^{(s)})\Delta t_{\text{ref}} + g_{m}\Delta W_{\text{ref}}^{(2)}, \\
        & \mathrel{\settowidth{\dimen0}{$=$}\hbox to \dimen0{\hss$\vdots$\hss}} \\
        X^{(m)}(5)\simeq \hat{\bs{x}}_{t_{N_{\text{ref}}}}^{(s)} &= \hat{\bs{x}}_{t_{499}}^{(s)}+f_{m}(t_{N_{\text{ref}}},\hat{\bs{x}}_{t_{499}}^{(s)},\ldots,\hat{\bs{x}}_{t_{496}}^{(s)})\Delta t_{\text{ref}} + g_{m}\Delta W_{\text{ref}}^{(N_{\text{ref}})},
    \end{align*}
where $g_{m}\coloneqq g_{m}(t_k,\bs{\eta}_{t_{0}}^{(s)},\bs{\eta}_{t_{-1}}^{(s)},\bs{\eta}_{t_{-2}}^{(s)},\bs{\eta}_{t_{-3}}^{(s)})$, $k=1,\ldots ,N_{\text{ref}}$, and $\Delta W_{\text{ref}}^{(k)} = W(t_{k+1}) - W(t_k)$, $\Delta W_{\text{ref}}^{(k)}\sim N(0,\Delta t_{\text{ref}})^d$. Now, the corresponding $L_2$-error is estimated as
\begin{align}
    \label{eq:estimated_cont_delay_sde_net_error}
    \Delta_{X^{(m)}}\coloneqq \left(\frac{1}{M_{\text{test}}}\sum_{s=1}^{M_{\text{test}}}\norm{\hat{\bs{x}}_{t_{N_{\text{ref}}}}^{(s)} - \bs{x}_{t_{N_{\text{ref}}}}^{(s)}}_2^2\right)^{1/2}.
\end{align}

\subsection{The discrete-time Delay-SDE-nets}
\label{app:convergence_study_discretized_models}
For each of these five models it is important to predict using the same Brownian path as for the generating SDDE and for the prediction made by the continuous Delay-SDE-net (the reference solution), such that all predictions are comparable. That is, for each $\kappa$ we use
\begin{align}
    \label{eq:convergence_analysis_bm}
    \Delta W_{\kappa}^{(\tilde{k})} = W(t_{\tilde{k}\kappa+\kappa}) - W(t_{\tilde{k}\kappa}) = \sum_{k=1}^{\kappa}\Delta W_{\text{ref}}^{(k)},
\end{align}
for $\tilde{k}=1,\ldots ,N_\kappa$, such that $\Delta W_{\kappa}^{(\tilde{k})}\sim N(0,\Delta t)^d$. Using this setup, we obtain $M_{\text{test}}$ predicted samples from each of the discretized Delay-SDE-net models $X^{\pi,(m)}(t)$, that is $X^{0.05,(m)}(t)$, $X^{0.1,(m)}(t)$, $X^{0.5,(m)}(t)$, $X^{1,(m)}(t)$ and $X^{5,(m)}(t)$. As an example, the following scheme illustrates how $\mathcal{D}_s$ is predicted from $X^{1,(m)}(t)$:\\
\begin{align*}
    X^{1,(m)}(1)= \hat{\bs{x}}_{t_{100}}^{(s)} &= \bs{\eta}_{t_0}^{(s)}+f_{m}(t_{100},\bs{\eta}_{t_{0}}^{(s)},\bs{\eta}_{t_{-100}}^{(s)},\bs{\eta}_{t_{-200}}^{(s)}, \bs{\eta}_{t_{-300}}^{(s)})\Delta t + g_{m}\Delta W_{100}^{(1)}, \\
    X^{1,(m)}(2)\simeq \hat{\bs{x}}_{t_{200}}^{(s)} &= \hat{\bs{x}}_{t_{100}}^{(s)}+f_{m}(t_{200},\hat{\bs{x}}_{t_{100}}^{(s)},\bs{\eta}_{t_{0}}^{(s)},\bs{\eta}_{t_{-100}}^{(s)},\bs{\eta}_{t_{-200}}^{(s)})\Delta t + g_{m}\Delta W_{100}^{(2)}, \\
    & \mathrel{\settowidth{\dimen0}{$=$}\hbox to \dimen0{\hss$\vdots$\hss}} \\
    X^{1,(m)}(5)\simeq \hat{\bs{x}}_{t_{N_{\text{ref}}}}^{(s)} &= \hat{\bs{x}}_{t_{400}}^{(s)}+f_{m}(t_{N_{\text{ref}}},\hat{\bs{x}}_{t_{400}}^{(s)},\ldots,\hat{\bs{x}}_{t_{100}}^{(s)})\Delta t + g_{m}\Delta W_{100}^{(N_{100})},
\end{align*}
where $g_{m}\coloneqq g_{m}(t_k,\bs{\eta}_{t_{0}}^{(s)},\bs{\eta}_{t_{-100}}^{(s)},\bs{\eta}_{t_{-200}}^{(s)}, \bs{\eta}_{t_{-300}}^{(s)})$, $k=100,200,\ldots ,N_{\text{ref}}$, $\Delta t = 1$, and $\Delta W_{100}^{(\tilde{k})}$, $\tilde{k}=1,\ldots ,5$ is defined according to Eq.\,\eqref{eq:convergence_analysis_bm}. Note that $\Delta_{X^{\pi,(m)}}$ is defined as in Eq.\,\eqref{eq:estimated_cont_delay_sde_net_error}.



\subsection{Model parameters} \label{a:num_param}

The final tuning parameters for the Delay-SDE-net used in the numerical convergence study are given in Table\,\ref{tab:num_tuning_converge}.

\begin{table}[hbt!]
    \centering
    \begin{tabular}{c|c}
    Parameter & Value \\ \toprule
        Learning rate $f_m$ & 0.01 \\ 
         Learning rate $g_{a,m}$ & 0.05 \\ 
         Iterations $f_m$ & 30 \\
         Iterations $g_{a,m}$ & 22 \\
         Momentum & 0.9 \\
         Weight decay &   0.00005 \\
         Optimizer & SDG \\\bottomrule
    \end{tabular}
    \caption{Tuning parameters for the Delay-SDE-net for the simulated data in \ref{subsubsect:numerical_convergence_study}.}
    \label{tab:num_tuning_converge}
\end{table}

\section{Comparison Study}
\subsection{OOD data} \label{a:ood_data}
The intervals where the test set $\mathcal{D}$ is replaced with values from the OOD data $\tilde{\mathcal{D}}$ in Section\,\ref{subsubsection:sim_compare} are given in Table\,\ref{tab:intervals}.

\begin{table}[hbt!]
    \centering
    \begin{tabular}{c|c|c|c|c|c|c|c|c|c|c}
        Year & 1 & 1 & 3 & 3 & 4 & 5 & 8 & 8 & 9 & 9 \\ \toprule
        Start $t_k$ &  23 & 313 & 79  & 344 & 275 & 67 & 1 & 190 & 48 & 323 \\
        End $t_k$ & 25 & 327 & 91 & 364 & 294 & 71 & 5 & 197 & 52 & 333\\ \bottomrule
    \end{tabular}
    \caption{The intervals where the test set $\mathcal{D}$ is replaced with values from $\tilde{\mathcal{D}}$ in the comparison study based on the simulated data}
    \label{tab:intervals}
\end{table}

\subsection{Model parameters} \label{a:comparison_param}

The final tuning parameters for the Delay-SDE-net and SDE-net for the comparison study on simulated data are given in Table\,\ref{tab:num_tuning} and \ref{tab:num_tuning_sde}, respectively. Some shared parameters for the two neural networks models are given in Table\,\ref{tab:both_tuning}.

\begin{table}[hbt!]
    \centering
    \begin{tabular}{c|c|c|c|c|c|c|}
         & lr $f_m$ & lr $g_{a,m}$ & lr $g_{e,m}$ & $\# f_m$ & $\# g_{a,m}$ & $\# g_{e,m}$   \\ \toprule
         $t+1$  & 0.01 & 0.001  & 0.005 & 500 & 500 & 20 \\
         $t+2$  & 0.005 & 0.0001 & & 500 & 500&  \\
         $t+3$  & 0.001 & 0.00005 &  & 500 & 500 &   \\
         $t+4$ & 0.001 & 0.000005 &  & 500 & 63 &    \\
         $t+5$  & 0.0005 & 0.000005 &  & 500 & 261 &   \\
         $t+6$  & 0.0005 & 0.000005 &  & 500& 279 &  \\
         $t+7$  & 0.0005 & 0.000005 &  & 500 & 303 & \\  \bottomrule
    \end{tabular}
    \caption{Tuning parameters for the Delay-SDE-net for the simulated data in \ref{subsubsection:sim_compare}. \textit{lr} is the learning rate and $\#$ is the number of training iterations.}
    \label{tab:num_tuning}
\end{table}

\begin{table}[hbt!]
    \centering
    \begin{tabular}{c|c|c|c}
         & lr $f_m$ & lr $g_{m}$ &  $\#$  \\ \toprule
         $t+1$  & 0.005 & 0.005  & 500  \\
         $t+2$  & 0.001 & 0.001 & 500   \\
         $t+3$  & 0.001 & 0.001 & 500  \\
         $t+4$ & 0.001 & 0.001 & 500     \\
         $t+5$  & 0.0005 & 0.0005 & 500  \\
         $t+6$  & 0.0005 & 0.0005 & 500   \\
         $t+7$  & 0.0005 & 0.0005 & 500  \\  \bottomrule
    \end{tabular}
    \caption{Tuning parameters for the SDE-net for the simulated data in \ref{subsubsection:sim_compare}. \textit{lr} is the learning rate and $\#$ is the number of training iterations.}
    \label{tab:num_tuning_sde}
\end{table}

\begin{table}[hbt!]
    \centering
    \begin{tabular}{c|c}
    Parameter & Value \\ \toprule
         Momentum & 0.9 \\
         Weight decay &   0.00005 \\
         Optimizer & SGD \\\bottomrule
    \end{tabular}
    \caption{Parameters for the Delay-SDE-net and SDE-net for the simulated data in \ref{subsubsection:sim_compare} and for the stratospheric data in \ref{subsubsect:results}. These were not tuned, but predetermined.}
    \label{tab:both_tuning}
\end{table}

\subsection{VAR model parameters} \label{a:var_param}

The VAR model parameters for the comparison study on simulated data are given in Table\,\ref{tab:var_model_parameters_sim}. The volatility function fit to VAR model residuals for the simulation study is different from the function used in \cite{eggen2022}. That is, let $y$ represent the expected value of squared VAR model residuals, and let $x$ represent the day of the year. Then, the fitted variance function is given by
\begin{align*}
    y = e^{a+bx},
\end{align*}
where $a$ and $b$ are constants. Optimal parameters $a$ and $b$ for each dimension are found using a least squares polynomial fit. See Table\,\ref{tab:variance_simulated_data_var_model}. 

\begin{table}[hbt!]
    \centering
    \begin{tabular}{c|c|c|c}
         $\hat{\phi}_1$ & $\hat{\phi}_2$ & $\hat{\phi}_3$ & $\hat{\phi}_4$  \\ \toprule
        $\begin{bmatrix} & X^{1,\pi} & X^{2,\pi} \\ X^{1,\pi} & 1.19 & -0.0062 \\ X^{2,\pi} & -0.37 & 1.06 \end{bmatrix}$ & 
        $\begin{bmatrix} 0.0015 & 0.0087 \\ 0.028 & -0.077 \end{bmatrix}$ &
        $\begin{bmatrix} -0.13 & -0.0015 \\ 0.31 & 0.025 \end{bmatrix}$ &
        $\begin{bmatrix} -0.17 & -0.0016  \\ 0.39 & -0.010 \end{bmatrix}$ \\  \bottomrule
    \end{tabular}
    \caption{VAR($4$) model parameters for the two dimensional dynamical system of simulated data $X^{\pi}=[X^{1,\pi},X^{2,\pi}]^T$ (see Section\,\ref{subsubsect:simulation_study_setup}). The positional markings for $\hat{\phi}_1$ hold for the other matrix-parameters as well.}
    \label{tab:var_model_parameters_sim}
\end{table}

\begin{table}[hbt!]
    \centering
    \begin{tabular}{c|c|c}
         & $X^{1,\pi}$ & $X^{2,\pi}$    \\ \toprule
         $a$  & $1.0628$ & $-4.3482$  \\
         $b$  & $-0.01694$ & $2.2593\times 10^{-5}$  \\  \bottomrule
    \end{tabular}
    \caption{Parameters for variance function fitted to expected value of squared VAR model residuals of simulated data.}
    \label{tab:variance_simulated_data_var_model}
\end{table}

\section{Real-world case study} \label{a:real_param}
\subsection{Model parameters}

The final tuning parameters for the Delay-SDE-net and SDE-net for the real-world case study are given in Table\,\ref{tab:real_tuning} and \ref{tab:real_tuning_sde}, respectively. Some shared parameters for the two neural networks models are given in Table\,\ref{tab:both_tuning}, which were the same as for the comparison study on simulated data.

\begin{table}[hbt!]
    \centering
    \begin{tabular}{c|c|c|c|c|c|c|}
         & lr $f_m$ & lr $g_{a,m}$ & lr $g_{e,m}$ & $\# f_m$ & $\# g_{a,m}$ & $\# g_{e,m}$   \\ \toprule
         $t+1$  & 0.05 & 0.01  & 0.01 & 500 & 486 & 20 \\
         $t+2$  & 0.01 & 0.01 & 0.01 & 466 & 114&  20 \\
         $t+3$  & 0.01 & 0.01 & 0.01  & 500 & 229 &  20  \\
         $t+4$ & 0.01 & 0.01 & 0.01 & 314 & 63 &   20  \\
         $t+5$  & 0.005 & 0.05 & 0.01 & 215 & 189 &  20  \\
         $t+6$  & 0.005 & 0.01 &  0.01& 312 & 500 &  20  \\
         $t+7$  & 0.005 & 0.01 & 0.01 & 213 & 500 &   20\\  \bottomrule
    \end{tabular}
    \caption{Tuning parameters for the Delay-SDE-net for the stratospheric data in \ref{subsubsect:results}. \textit{lr} is the learning rate and $\#$ is the number of training iterations.}
    \label{tab:real_tuning}
\end{table}

\begin{table}[hbt!]
    \centering
    \begin{tabular}{c|c|c|c}
         & lr $f_m$ & lr $g_{m}$ &  $\#$  \\ \toprule
         $t+1$  & 0.0005 & 0.0005  & 500  \\
         $t+2$  & 0.001 & 0.001 & 27   \\
         $t+3$  & 0.005 & 0.005 & 26  \\
         $t+4$ & 0.005 & 0.005 & 20     \\
         $t+5$  & 0.001 & 0.001 & 46  \\
         $t+6$  & 0.001 & 0.001 & 52   \\
         $t+7$  & 0.001 & 0.001 & 56  \\  \bottomrule
    \end{tabular}
    \caption{Tuning parameters for the SDE-net for the stratospheric data in \ref{subsubsect:results}. \textit{lr} is the learning rate and $\#$ is the number of training iterations.}
    \label{tab:real_tuning_sde}
\end{table}

\subsection{VAR model parameters}
A detailed introduction of the VAR model fit to the two dimensional system of U wind and temperature is given in \cite{eggen2022}. Please see the given reference for an explanation of given model parameters in this section. 

\begin{table}[hbt!]
    \centering
    \begin{tabular}{c|c|c|c|c|c|c|c|c|c|c}
         $c_0$ & $c_2$ & $c_4$ & $c_6$ & $c_8$ & $c_{10}$ & $c_{12}$ & $c_{14}$ & $c_{16}$ & $c_{18}$ & $c_{20}$  \\ \toprule
          $11.47$ & $0.41$ & $22.71$ & $0.20$ & $0.83$ & $0.67$ & $1.048$ & $1.13$ &$0.82$ & $0.60$ & $0.58$  \\  \midrule
          $c_1$ & $c_3$ & $c_5$ & $c_7$ & $c_9$ & $c_{11}$ & $c_{13}$ & $c_{15}$ & $c_{17}$ & $c_{19}$ & $c_{21}$  \\ \toprule
          $-0.00018$ & $-0.13$ & $-4.14$ & $-0.64$ & $-0.040$ & $-0.81$ & $0.64$ & $-0.18$ & $-0.0035$ & $0.49$ & $0.38$  \\  \bottomrule
    \end{tabular}
    \caption{Seasonality function parameters for U wind.}
    \label{tab:u_wind_season}
\end{table}

\begin{table}[hbt!]
    \centering
    \begin{tabular}{c|c|c|c|c|c|c|c|c|c|c}
         $c_0$ & $c_2$ & $c_4$ & $c_6$ & $c_8$ & $c_{10}$ & $c_{12}$ & $c_{14}$ & $c_{16}$ & $c_{18}$ & $c_{20}$  \\ \toprule
          $226.28$ & $-0.074$ & $-12.09$ & $0.20$ & $1.80$ & $0.11$ & $0.033$ & $-0.13$ & $-0.12$ & $-0.23$ & $0.093$ \\  \midrule
          $c_1$ & $c_3$ & $c_5$ & $c_7$ & $c_9$ & $c_{11}$ & $c_{13}$ & $c_{15}$ & $c_{17}$ & $c_{19}$ & $c_{21}$  \\ \toprule
           $-0.00010$ & $-0.088$ & $1.57$ & $-0.050$ & $2.80$ & $0.14$ & $1.40$ & $0.14$ & $0.19$ & $-0.040$ & $-0.099$ \\  \bottomrule
    \end{tabular}
    \caption{Seasonality function parameters for temperature.}
    \label{tab:temp_season}
\end{table}

\begin{table}[hbt!]
    \centering
    \begin{tabular}{c|c|c|c}
         $\hat{\phi}_1$ & $\hat{\phi}_2$ & $\hat{\phi}_3$ & $\hat{\phi}_4$  \\ \toprule
        $\begin{bmatrix} & \text{U wind} & \text{Temp.} \\ \text{U wind} & 1.72 & 0.0054 \\ \text{Temp.} & -0.10 & 1.53\end{bmatrix}$ & 
        $\begin{bmatrix} -1.03 & -0.024 \\ -0.082 & -0.73\end{bmatrix}$ &
        $\begin{bmatrix} 0.32 & -0.0002 \\ 0.11 & 0.27\end{bmatrix}$ &
        $\begin{bmatrix} -0.044 & 0.021  \\ 0.017 & -0.098 \end{bmatrix}$ \\  \bottomrule
    \end{tabular}
    \caption{VAR($4$) model parameters for the two dimensional dynamical system of U wind and temperature. The positional markings for $\hat{\phi}_1$ hold for the other matrix-parameters as well.}
    \label{tab:var_model_parameters}
\end{table}

\begin{table}[hbt!]
    \centering
    \begin{tabular}{c|c|c|c|c|c|c|c|c}
        \multicolumn{9}{c}{Winter/spring: $w_{0.30,2}^{(2)}$} \\
         $d_{0,2}^{(1)}$ & $d_{1,2}^{(1)}$ & $d_{2,2}^{(1)}$ & $d_{3,2}^{(1)}$ & $d_{4,2}^{(1)}$ & $-$ & $-$ & $-$ & $-$ \\ \toprule
            $-30.63$ & $-339.16$ &  $1987.79$ & $373.69$ & $-957.80$ & $-$ & $-$ & $-$ & $-$   \\  \midrule
          \multicolumn{9}{c}{Summer: $w_{0.05,2}^{(3)}$} \\
           $d_{0,2}^{(2)}$ & $d_{1,2}^{(2)}$ & $d_{2,2}^{(2)}$ & $d_{3,2}^{(2)}$ & $d_{4,2}^{(2)}$ & $d_{5,2}^{(2)}$ & $d_{6,2}^{(2)}$ & $-$ & $-$ \\ \toprule
            $55.88$ & $-198.28$ & $2460.26$ & $-472.34$ & $2420.60$ & $611.73$ & $-2390.93$ & $-$ & $-$   \\  \midrule
           \multicolumn{9}{c}{Autumn/winter: $w_{0.05,2}^{(4)}$} \\
           $d_{0,2}^{(3)}$ & $d_{1,2}^{(3)}$ & $d_{2,2}^{(3)}$ & $d_{3,2}^{(3)}$ & $d_{4,2}^{(3)}$ & $d_{5,2}^{(3)}$ & $d_{6,2}^{(3)}$ & $d_{7,2}^{(3)}$ & $d_{8,2}^{(3)}$ \\ \toprule
           $128.42$ & $104.79$ & $22.64$ & $37.76$ & $24.94$ & $-65.26$ & $-9.86$ & $-190.33$ & $-96.95$ \\ \bottomrule
    \end{tabular}
    \caption{Parameters for volatility function fitted to VAR model residuals of U wind.}
    \label{tab:uwind_vol}
\end{table}

\begin{table}[hbt!]
    \centering
    \begin{tabular}{c|c|c|c|c}
        \multicolumn{5}{c}{Winter/spring: $w_{0.44,1}^{(2)}$} \\
         $d_{0,1}^{(1)}$ & $d_{1,1}^{(1)}$ & $d_{2,1}^{(1)}$ & $d_{3,1}^{(1)}$ & $d_{4,1}^{(1)}$ \\ \toprule
         $-1.16$ & $-24.02$ & $101.47$ & $25.98$ & $-47.80$  \\  \midrule
          \multicolumn{5}{c}{Summer: $w_{2.0,1}^{(2)}$} \\
           $d_{0,1}^{(2)}$ & $d_{1,1}^{(2)}$ & $d_{2,1}^{(2)}$ & $d_{3,1}^{(2)}$ & $d_{4,1}^{(2)}$\\ \toprule
            $0.089$ & $0.10$ & $0.021$ & $0.033$ & $0.014$  \\  \midrule
           \multicolumn{5}{c}{Autumn/winter: $w_{0.44,1}^{(2)}$} \\
           $d_{0,1}^{(3)}$ & $d_{1,1}^{(3)}$ & $d_{2,1}^{(3)}$ & $d_{3,1}^{(3)}$ & $d_{4,1}^{(3)}$ \\ \toprule
           $16.58$ & $-344.37$ & $113.479$ & $119.90$ & $134.75$ \\ \bottomrule
    \end{tabular}
    \caption{Parameters for volatility function fitted to VAR model residuals of temperature.}
    \label{tab:temp_vol}
\end{table}

\bibliographystyle{rusnat}  
\bibliography{bibl}

\end{document}